\let\proof\@undefined
\let\endproof\@undefined
\newtheorem{assumption*}{Assumption}
\newtheorem{theorem*}{Theorem}
\newtheorem{lemma}{Lemma}
\newtheorem{problem}{Problem}
\newtheorem{proposition}{Proposition}
\newtheorem{observation*}{Observation}
\newtheorem{remark*}{Remark}
\pgfplotsset{compat=1.7}
\newcommand{\docupdate}[1]{{#1}}
\begin{document}

% \title{Replanning for Coverage of Partially Known Environments}
\title{Anytime Replanning of Robot Coverage Paths for Partially Unknown Environments}
\urldef{\ramesh}\url{m5ramesh@uwaterloo.ca}
\urldef{\smith}\url{stephen.smith@uwaterloo.ca}
\urldef{\fidan}\url{fidan@uwaterloo.ca}
\urldef{\imeson}\url{frank.imeson@avidbots.com}

\author{Megnath Ramesh, Frank Imeson, Baris Fidan, and Stephen L. Smith
\thanks{M. Ramesh (\ramesh) and S. L. Smith (\smith) are with the Department of Electrical and Computer Engineering and B. Fidan (\fidan) is with the Department of Mechanical and Mechatronics Engineering, at the University of Waterloo, Waterloo ON, Canada}
\thanks{F. Imeson (\imeson) is with Avidbots Corp., Kitchener ON, Canada}
}

\maketitle

% Our focus is on paths where the coverage lines are axis-parallel (horizontal/vertical). 
% axis-parallel

%% Replanning is essential as the path is no longer valid with newly observed obstacles.
%% Existing turn-minimizing coverage planners work offline. However, using them online can be computationally expensive.
%% Simply extending the existing approaches also doesn't work as well.

\begin{abstract}
In this paper, we propose a method to replan coverage paths for a robot operating in an environment with initially unknown static obstacles. Existing coverage approaches reduce coverage time by covering along the minimum number of coverage lines (straight-line paths). However, recomputing such paths online can be computationally expensive resulting in robot stoppages that increase coverage time. A naive alternative is \textit{greedy detour} replanning, i.e., replanning with minimum deviation from the initial path, which is efficient to compute but may result in unnecessary detours. In this work, we propose an anytime coverage replanning approach named \textit{OARP-Replan} that performs near-optimal replans to an interrupted coverage path within a given time budget. We do this by solving linear relaxations of \docupdate{integer linear programs (ILPs)} to identify sections of the interrupted path that can be optimally replanned within the time budget. \docupdate{We validate OARP-Replan in simulation and perform comparisons against a greedy detour replanner and other state-of-the-art coverage planners. We also demonstrate OARP-Replan in experiments using an industrial-level autonomous robot.}
\end{abstract}

\begin{IEEEkeywords}
Coverage path planning, online coverage replanning, planning in unknown environments.
\end{IEEEkeywords}

\section{Introduction}

%% **** Introduction Story ****
%% - What is coverage planning?
%% - What happens when an initially unknown change happens?
%% - Motivate the problem.
    %% - Coverage Planning approaches
    %%    - Online planning with no optimality guarantees
    %%    - Offline planning with naive obstacle avoidance
    %%    - Hybrid fixing the plan
    %% - Limitations of current approaches:
    %%    - Lost optimality in the generated coverage plan
    %%    - Replanning the entire plan is not feasible (computation)
%% - Coverage replanning problem definition/statement
%% - What is our solution?
%%    - Local coverage replanning
%%        - Only replan the parts of the path that have been changed
%%        - Identify the parts to change
%%        - Talk about computational limitations
%%    - Formal contributions
%%        - We answer what, how, and when to replan the coverage path
%% - Related work

% Candidate first paragraph: real world applications => coverage planning => new information => replanning

Coverage planning problems show up in many real-world applications such as cleaning \cite{hofnerPathPlanningGuidance1995}, manufacturing \cite{kapanogluPatternbasedGeneticAlgorithm2012}, and agriculture \cite{hameedIntelligentCoveragePath2014}. Informally, the coverage planning problem is to find a path that has the robot covering its entire environment with an onboard sensor or tool and minimizes a cost function \cite{galceranSurveyCoveragePath2013}. In this paper, we tackle the problem of replanning optimal coverage paths that have been interrupted by previously unknown static obstacles. We refer to this as the \emph{coverage replanning} problem. This problem arises when businesses such as retail stores and warehouses change their environments frequently to accommodate new products and remapping beforehand each time would be too costly.

\begin{figure*}
\centering
    \subcaptionbox{}
    {\centering
    \includegraphics[width=0.255\linewidth]{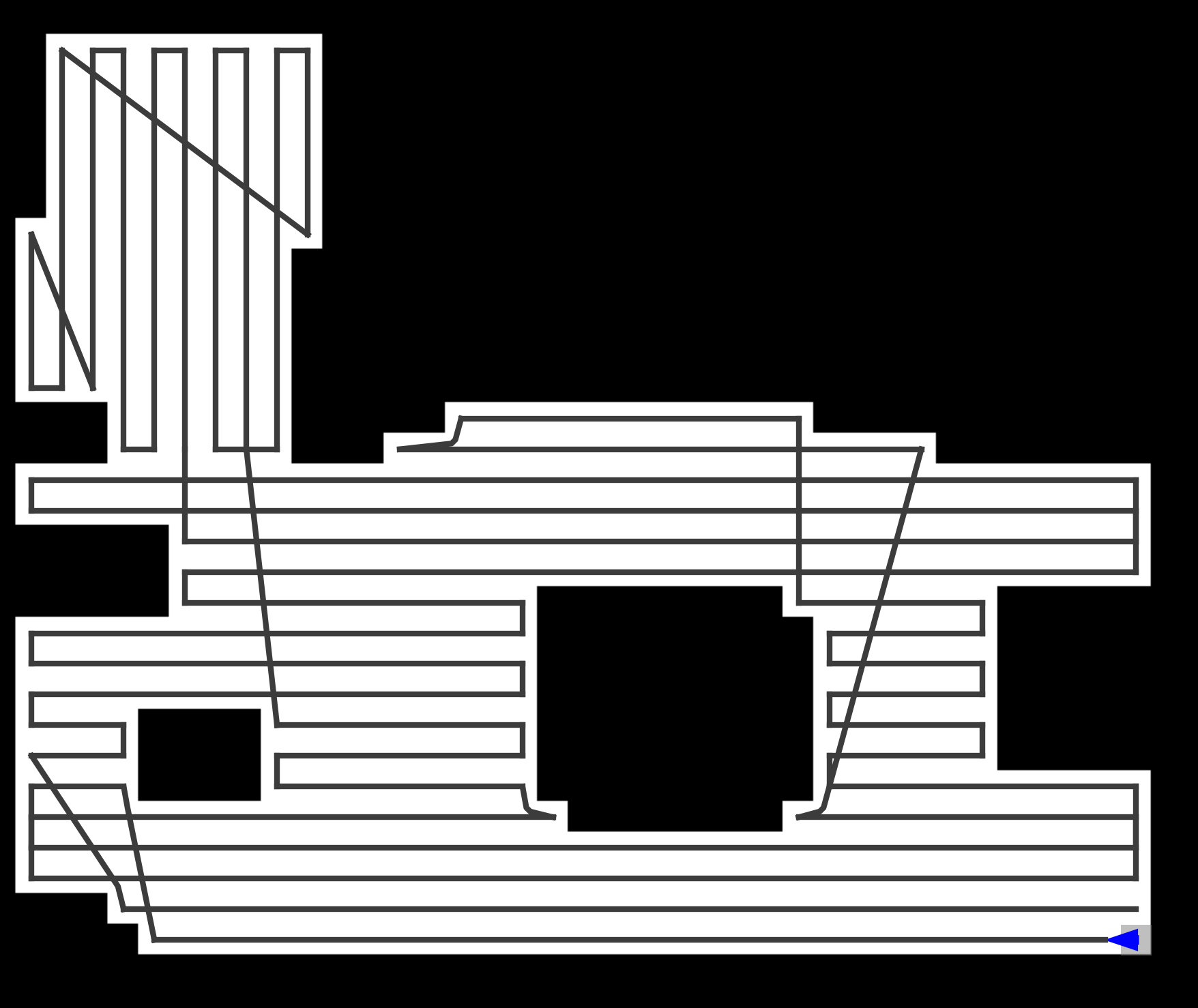}
    }
    \hspace{1cm}
    \subcaptionbox{}
    {\centering
    \includegraphics[width=0.255\linewidth]{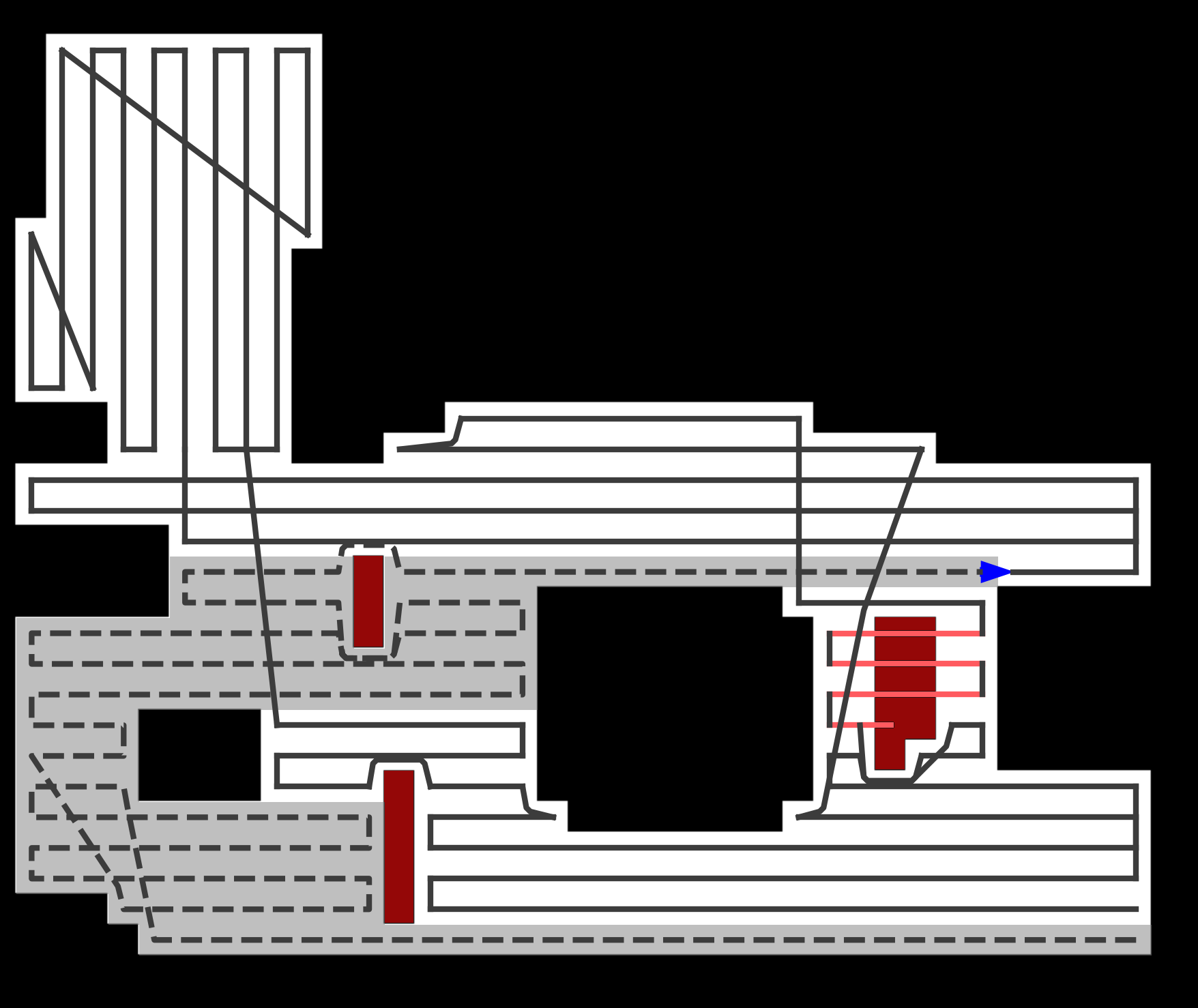}
    }
    \hspace{1cm}
    \subcaptionbox{}
    {\centering
    \includegraphics[width=0.255\linewidth]{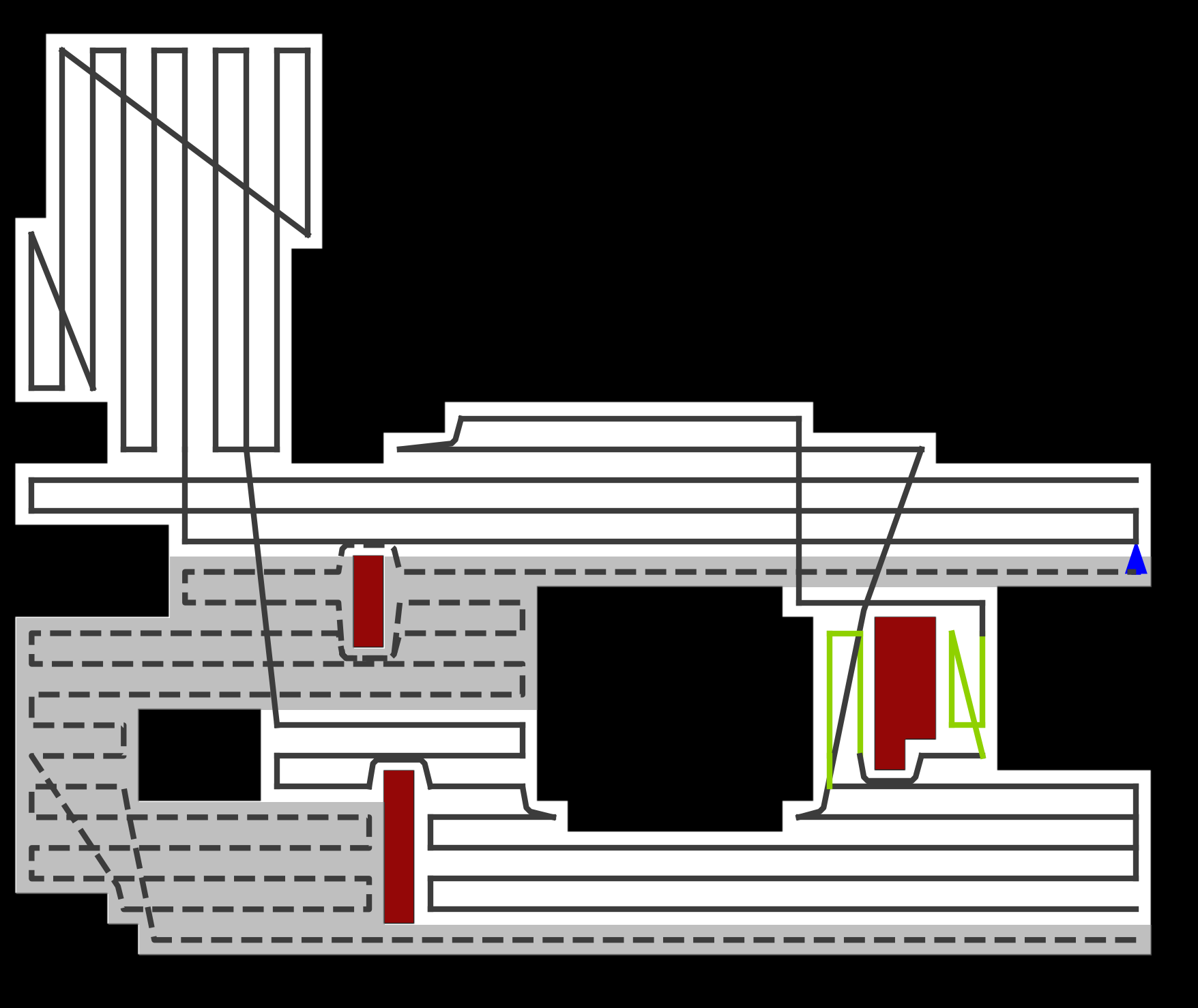}
    }
    \caption{Replanning coverage paths in partially unknown environments. (a) An initial coverage path for the known base environment, (b) the robot (blue triangle) covers part of the environment (gray region) and observes a new obstacle (red box) interrupting the initial path (red path), and (c) a near-optimal replanned path (green path) computed online using the proposed approach (OARP-Replan).}
    \label{fig:replanning-example}
\end{figure*}

One approach to coverage replanning is to stop executing the current coverage path and plan a new path for the uncovered environment with the new obstacle(s). However, coverage planning \docupdate{for arbitrary environments is generally NP-Hard} \cite{arkinApproximationAlgorithmsLawn2000} and this approach does not scale well for large environments with many unknown obstacles. Moreover, an obstacle may only affect a small section of the path, i.e., the unbroken parts of the path may still be drivable and near-optimal. An alternate approach is to only replan the affected section(s) of the path (\textit{local replanning}) while keeping the rest of the path intact. A simple example of local replanning would be where the robot simply drives around the obstacle whenever a collision is imminent. However, a more practical approach would be to use the time while driving towards the blockage to find more optimal solutions. In this paper, we propose an anytime coverage replanner that utilizes the time spent driving to conduct near-optimal replans on the provided coverage paths (see Fig. \ref{fig:replanning-example}).

% I feel like we need to define our solution architecture: ORAP 1) cover the environment with ranks, 2) connect the ranks, 3) choose an ordering of the ranks. OARP-replan 1) repair the ranks, 2) fill in the new connections, 3) maintain the ordering of any un-affected ranks while choosing an ordering for the rest of the path.
In our previous work \cite{rameshOptimalPartitioningNonConvex2022a}, we developed the Optimal Axis-Parallel Rank Partitioning (OARP) approach to plan coverage paths that minimize the number of turns in the solution. Minimizing turns is motivated by path quality and optimality as robots can have decreased performance while turning (speed and coverage). OARP solves this using the following three-step framework: i) decomposing the environment into grid cells, ii) using the decomposition to compute the minimum number of axis-parallel (horizontal and vertical) \textit{ranks}, i.e. long rectangles that match the width of the robot's coverage tool while covering along a straight-line path, and iii) finding an optimal tour of the ranks to obtain the coverage path. In this paper, we extend OARP to provide a coverage replanning approach, namely \textit{OARP-Replan}, that replans coverage paths.

% \begin{figure}
% \centering
% \subcaptionbox{}
% {\centering
% \includegraphics[width=0.45\linewidth]{Assets/Map9/Coverage Path.png}
% }
% \subcaptionbox{}
% {\centering
% \includegraphics[width=0.45\linewidth]{Assets/Map9/OARP - Annotated.png}
% }
% \caption{(a) An initial turn-minimizing coverage path for an example environment and (b) a replanned coverage path in the presence of unknown obstacles.}
% \label{fig:replanning-example}
% \end{figure}

Our specific contributions are as follows:
\begin{enumerate}[1.]
\item We present an \docupdate{Integer Linear Program (ILP)} that replans the ranks of a coverage path such that a tour of the ranks (i.e. a coverage path) can be replanned within a time budget.
    \begin{enumerate}[a)]
    \item We leverage LP relaxations of this \docupdate{ILP} to replan ranks in polynomial time (efficient to compute).
    \item We present a second \docupdate{ILP} that reduces the number of decision variables, and therefore replans the coverage ranks faster than the first \docupdate{ILP}.    
\end{enumerate}
\item We provide an anytime coverage replanning method named \textit{OARP-Replan} that leverages the above \docupdate{ILPs} to conduct near-optimal replans to the coverage path within a given time budget.
\item \docupdate{Finally, we conduct simulations and robot experiments to validate our proposed approach. Our simulation results include comparisons against other state-of-the-art approaches, including offline approaches (i.e. all obstacles are known) \cite{bahnemannRevisitingBoustrophedonCoverage2021, agarwalAreaCoverageMultiple2022a}.}
\end{enumerate}
\subsection{Related Works}
Detailed surveys of coverage planning approaches can be found in \cite{galceranSurveyCoveragePath2013, bormannIndoorCoveragePath2018, tanComprehensiveReviewCoverage2021}. Broadly, they can be characterized into two classes: \textit{offline} and \textit{online} approaches. Offline approaches plan coverage paths with complete information about all obstacles in the environment, whereas online approaches start with little to no information about the obstacles in the environment and compute a path during the coverage operation. 

\textbf{\emph{Offline Coverage Planning:}} Offline approaches can be further categorized into 2 sub-classes: (i) \textit{exact decomposition} approaches, and (ii) \textit{grid-based} approaches. In exact decomposition methods, the environment is first decomposed into polygonal sub-regions or \textit{cells}. Each cell is then individually covered either by sweeping (i) back-and-forth along coverage parallel lines \cite{chosetCoveragePathPlanning1998}, (ii) along spiral paths \cite{cabreiraEnergyAwareSpiralCoverage2018}, or (iii) a combination of both \cite{brownConstrictionDecompositionMethod2016}. There are also turn-minimizing approaches that cover each cell in a locally optimal direction, generally determined using the cell's geometry \cite{bochkarevMinimizingTurnsRobot2016, bahnemannRevisitingBoustrophedonCoverage2021, dasMappingPlanningSample2014}. These cells are however complex to compute, and as a result, these methods do not scale well for online replanning. 

\docupdate{Grid-based approaches use a grid approximation of the given environment and compute a path covering all the grid cells in the approximation. This path can be computed either by solving the Traveling Salesman Problem (TSP) \cite{bormannIndoorCoveragePath2018}, using a spanning tree of the grid cells (Spanning Tree Coverage or STC) \cite{gabrielySpanningtreeBasedCoverage2001}, or using a wavefront algorithm to determine visitation order \cite{zelinskyUnifiedApproachPlanning1994}.} Turn-minimizing approaches of this class look to minimize the number of axis-parallel (horizontal or vertical) coverage lines or \textit{ranks} in the coverage path \cite{vandermeulenTurnminimizingMultirobotCoverage2019, luTMSTCPathPlanning2023}. While these methods were demonstrated to work well in offline settings, they lack online extensions that can replan a path within a given time budget. Our proposed method, OARP-Replan, extends the OARP approach from \cite{rameshOptimalPartitioningNonConvex2022a}, which is an offline grid-based turn-minimizing approach, to be applicable in online replanning scenarios.

% Boustrophedon \cite{chosetCoveragePathPlanning1998} is a widely used approach of this class.

% Vandermeulen et al  proposed a method that employs a heuristic to minimize the number of ranks to cover a grid approximation of their environment. Recently, Lu et al introduced a turn-minimizing extension to STC, namely TMSTC$^*$ that generates a spanning tree with the optimal number of branches \cite{luTMSTCPathPlanning2023}. 

%% Online Coverage planning

\textbf{\emph{Online Coverage Planning:}} Online coverage planning approaches have gained recent interest, and as such, there is a wide range of different approaches in the literature. Initially, online exact decomposition approaches were investigated \cite{acarSensorbasedCoverageUnknown2002, vietBAOnlineComplete2013}, but these methods involve identifying changes to exact cells online, which does not scale well for large cluttered environments. Moreover, these methods do not aim to minimize turns in the coverage path.

Grid-based methods work better for such environments as one can easily update a grid with new occupancy information. Our proposed approach falls under this class. A popular approach of this class is Online STC \cite{gabrielyCompetitiveOnlineCoverage2003}, which computes a spanning tree of the grid cells online to update its coverage plan. \docupdate{However, given a square coverage tool of width $l$, STC is generally applied to coarse grids with cells of width $2l$ and does not generalize well to finer grids with cell width $l$.} Other grid-based online approaches use various local search and optimization methods to determine the coverage plan in an unknown environment. Song et al \cite{song2018} proposed an approach using artificial potential fields (APF) to determine a coverage path online. A similar technique to APFs is using bio-inspired neural networks to model and search the environment. This was originally proposed by Luo and Yang in \cite{luoBioinspiredNeuralNetwork2008} and has gained popularity in many recent works \cite{muthugalaEnergyefficientOnlineComplete2022, sunCompleteCoverageAutonomous2019a, godioBioinspiredNeuralNetworkBased2021}. However, these local search-based approaches are susceptible to deadlocking, where the robot does not know where to go next. Deadlocks are computationally expensive to get out of and affect overall coverage time. Moreover, these approaches generally minimize turns by penalizing them in a local search cost function, which does not guarantee minimum turns for the path overall. Our proposed approach avoids deadlocks (using an initial path) and guarantees minimum turns in the overall path when provided with sufficient planning time.

\docupdate{Online coverage planning shares similarities with exploration planning, which involves covering unknown environments using sensors \cite{dangGraphbasedSubterraneanExploration2020a}. Exploration planning algorithms use information theory \cite{corahEfficientOnlineMultirobot2017, huOffPolicyEvaluationOnline2023} and frontiers \cite{yamauchiFrontierbasedApproachAutonomous1997} to determine the next-best scan of the environment, rather than using decompositions. Recently, a decomposition-free online coverage planner was proposed that uses frontiers and search-based planning to identify the next-best region to cover \cite{kusnurCompleteDecompositionFreeCoverage2022}.} Online coverage replanning has also been investigated in 3D settings for applications in underwater inspection tasks \cite{galceranCoveragePathPlanning2015}, but they involve 3D maneuvers and are not necessarily feasible for robots operating in 2D planes.

% The "neural network" here specifies interconnections between neighboring grid cells in the environment which is used to determine the ordering of the cells to cover. However, similar to APF methods, the robot is susceptible to deadlocking which may considerably affect planning time. 

% \wip{
% %% Online path replanning
% \textbf{\emph{Path Replanning:}} 

% \textbf{TODO}
% }

% \textit{\textbf{Organization:}} 

\subsection{Organization}

The rest of this paper is organized as follows. In Section \ref{sec:prelims}, we provide some background on the coverage planning problem and the OARP approach from our previous work \cite{rameshOptimalPartitioningNonConvex2022a}. In Section \ref{sec:problem-def}, we define the problem of replanning a coverage path within a given time budget. In Section \ref{section:solution-approach}, we present the high-level framework of our proposed approach, OARP-Replan, that constrains the runtime to replan the coverage path, which solves the problem in two stages: (i) \textit{rank replanning}, and (ii) \textit{touring}. In Section \ref{sec:rank-replan}, we describe the rank replanning stage where we use \docupdate{ILPs} to replan the ranks of the coverage path such that the overall runtime is constrained. In Section \ref{sec:rank-touring}, we describe the touring stage where we formulate a Generalized Traveling Salesman Problem (GTSP) to compute a tour of the replanned ranks to obtain the replanned path. In Section \ref{sec:results}, we provide simulation results on a range of real-world environments and compare our proposed approach against state-of-the-art coverage planning methods in online and offline settings. \docupdate{In Section \ref{sec:robot-experiments}, we present experimental results of testing our replanning approach using an industrial-level autonomous cleaning robot.}
\section{Preliminaries}
\label{sec:prelims}

We first review the coverage planning problem and our previously proposed coverage planning approach, namely the optimal axis-parallel rank partitioning (OARP) approach~\cite{rameshOptimalPartitioningNonConvex2022a}.

\subsection{The Coverage Planning Problem}
Consider a 2D non-convex environment with known obstacles, where $\mathcal{W} \subseteq \mathbb{R}^2$ is a closed and bounded set that represents all free points within the environment. We now consider a robot with a state composed of its position \docupdate{$\boldsymbol{x} \in \mathcal{W}$, where $\boldsymbol{x}$ is a vector in $\mathbb{R}^2$}, and a heading $\theta \in [0,2\pi)$ radians. The robot is carrying a coverage tool with a footprint given by $\mathcal{A} \subseteq \mathbb{R}^2$. In this work, similar to \cite{arkinOptimalCoveringTours2005, vandermeulenTurnminimizingMultirobotCoverage2019, rameshOptimalPartitioningNonConvex2022a}, we assume that the coverage tool is a square of width $l > 0$. Let $\mathcal{A}(\boldsymbol{x}, \theta) \subseteq \mathcal{W}$ represent the placement of the tool with respect to the robot's state. \docupdate{In reality, a square tool may not be able to cover the entire environment, so we look to cover an approximation of the environment $\widetilde{\mathcal{W}} \subseteq \mathcal{W}$ that the tool can reach. Consider a set $\mathcal{P}$ of all possible paths in the environment, where each path $P \in \mathcal{P}$ is given by a sequence of poses, i.e. position and heading pairs $(\boldsymbol{x},\theta)$.} We now formally define the coverage planning problem (CPP) of interest as follows.

% \wip{However, for simplicity of notation, we assume that $\widetilde{\mathcal{W}} = \mathcal{W}$}

% is to then plan a path $P$ for the robot from the set of all possible paths $\mathcal{P}$ such that the robot's tool covers the environment with the robot's tool while minimizing a cost function $J(P)$. A formal representation of this is

\begin{problem}[Coverage Planning Problem (CPP)]
\label{def:coverage}
\docupdate{Given an approximation $\widetilde{\mathcal{W}}$ of a 2D environment $\mathcal{W}$} and a robot carrying a coverage tool of footprint $\mathcal{A}$, plan a path $P$ from the set $\mathcal{P}$ of all possible paths which solves
\begin{align}
    &\min_{P \in \mathcal{P}} \quad J(P) \\
    &\textrm{s.t.} \quad \bigcup\limits_{(\boldsymbol{x},\theta) \in P} \mathcal{A}(\boldsymbol{x}, \theta) = \widetilde{\mathcal{W}} \textrm{,}
\end{align}
where $J(P)$ is the cost of the path $P$.
\end{problem}

In this paper, we consider the path cost $J(P)$ to be the time for the robot to cover the given environment along $P$.

\subsection{Optimal Axis-Parallel Rank Partitioning (OARP)}

We now briefly describe the coverage planning approach from our previous work \cite{rameshOptimalPartitioningNonConvex2022a}, namely the Optimal Axis-Parallel Rank Partitioning (OARP) approach. The focus of the OARP approach is to plan optimal coverage paths in which the coverage tool moves only in \textit{axis-parallel} directions (horizontal and vertical) during coverage. The OARP approach uses a three-step framework to plan coverage paths: (i) decompose the environment $\mathcal{W}$ into an Integral Orthogonal Polygon (IOP), which is a set of square \textit{grid cells} of size $l$, (ii) place the minimum number of axis-parallel (horizontal or vertical) \emph{ranks} (straight-line paths) to cover the IOP, and (iii) compute a \emph{tour} of the ranks to construct the coverage path. \docupdate{Here, the IOP forms the environment subset $\widetilde{\mathcal{W}}$ that we look to cover.} To place the minimum number of ranks in the IOP, OARP solves a linear program (LP) to compute the coverage orientation of each cell in the IOP. Once the ranks are obtained, a tour of the ranks is computed by solving a Generalized Travelling Salesman Problem (GTSP) that minimizes the cost of transitioning (driving) between ranks. The resulting coverage path is represented as a series of ranks connected by cost-minimizing \textit{transition paths}, which are collision-free dynamically feasible paths to traverse between ranks. Note that, unlike ranks, transition paths do not have to be axis-parallel.
%% Problem Definition

%% Robot operating in an unknown 2D environment contained within a known base environment
%% Set of unknown obstacles can be detected using robot's sensors
%% 

\section{Problem Definition}
\label{sec:problem-def}

In this section, we define the problem of replanning the coverage path.

\subsection{Coverage Replanning Problem}

Let us consider the case where our robot is trying to follow its initial coverage path $P$ but it observes a set $\mathcal{O}$ of previously unknown obstacles. In other words, the actual environment is $\overline{\mathcal{W}} \subseteq \mathcal{W} \setminus \mathcal{O}$. We assume the robot is equipped with a sensor that can observe its environment within a sensor footprint relative to the robot's position, e.g., an optical range sensor. Suppose, at position $\boldsymbol{x}_i$, the robot detects a new obstacle interrupting the path. Let $\tau$ be the time in which the robot would reach the obstacle along its current path from $\boldsymbol{x}_i$. While the robot might be able to continue along $P$ for time $\tau$, the subsequent path must be replanned. One solution is to solve the full CPP for the new environment $\overline{\mathcal{W}}$. While this method would give us the optimal coverage path, it has a major drawback. If solving CPP takes longer than $\tau$, the robot may have to stop and wait for a new path, which is undesirable. To address this, we instead solve the coverage replanning problem that takes into account a time budget $\tau$.

Let $P_{\text{r}}$ be the remaining path after the first interruption by the obstacle. \docupdate{Let $\mathcal{W}_{\text{r}} \subseteq \widetilde{\mathcal{W}}$ be the area covered by $P_{\text{r}}$, where $\mathcal{O} \cap \mathcal{W}_{\text{r}} \neq \emptyset$}. The replanning problem is to plan a new coverage path $P_{\text{r}}'$ before $\tau$ such that it covers the same areas as $P_{\text{r}}$ without colliding into $\mathcal{O}$. We are now ready to pose the main problem we look to solve in this paper.

\begin{problem}[Coverage Replanning Problem (CRP)]
\label{problem:replanning}
\docupdate{Given a path $P$ covering $\widetilde{\mathcal{W}}$}, a set of newly observed obstacles $\mathcal{O}$, a time $\tau$ until the first such obstacle is encountered along $P$, the remaining path $P_{\text{r}}$ after the first encounter, and the area \docupdate{$\mathcal{W}_{\text{r}} \subseteq \widetilde{\mathcal{W}}$} covered by $P_{\text{r}}$, find a new coverage path $P_{\text{r}}'$ within time $\tau$ which solves
\begin{align}
    &\min_{P_{\text{r}}' \in \mathcal{P}} \quad J(P_{\text{r}}') \\
    &\textrm{s.t.} \quad \bigcup\limits_{(\boldsymbol{x}, \theta) \in P_{\text{r}}'} \mathcal{A}(\boldsymbol{x}, \theta) = \mathcal{W}_{\textrm{r}} \setminus \mathcal{O} \textrm{.}
\end{align}
\end{problem}
CRP is similar to CPP when $\tau$ is infinite and $P = P_{\text{r}}$. However, for a finite $\tau$, CRP must find a trade-off between minimizing coverage cost and ensuring a path is computed within $\tau$. The larger the $\tau$ value, the closer the coverage cost is to the optimal path (CPP solution).

\subsection{Rank-based Coverage Replanning}

CPP is NP-Hard \cite{arkinApproximationAlgorithmsLawn2000}, which indicates that Problem \ref{problem:replanning} is intractable. Therefore, we look for relaxations of Problem \ref{problem:replanning} that are more solvable. A common approach is to solve this problem in stages. While this may remove some optimal solutions in the search space, it reduces the complexity of the problem and makes it tractable.

In this paper, we focus on replanning initial coverage paths that are represented as a series of axis-parallel (horizontal/vertical) ranks covering an IOP decomposition of our initial environment, e.g., paths generated by OARP. With this setup, we approach Problem \ref{problem:replanning} in two stages: (i) rank replanning, and (ii) touring. In the rank replanning stage, we compute the minimum number of axis-parallel ranks to cover the IOP of the remaining region $\mathcal{W}_{\text{r}} \setminus \mathcal{O}$. In the touring stage, we obtain the replanned path $P_{\text{r}}'$ by computing a tour of the ranks that minimizes the cost of transitioning between the ranks. We also look to ensure that both stages are completed within the time budget $\tau$. The following problem captures this requirement.

\begin{problem}[Rank-based Coverage Replanning]
\label{problem:rank-based-replanning}
Given a set of newly observed obstacles $\mathcal{O}$ that interrupts the remainder $P_{\text{r}}$ of the coverage path $P$, and a time budget $\tau$ before the robot's first encounter with an obstacle in $\mathcal{O}$, compute the following within time $\tau$: (i) a minimum set of ranks that covers the IOP representation of the uncovered free region $\mathcal{W}_{\text{r}} \setminus \mathcal{O}$ and (ii) a tour of the ranks that minimize the total cost of transition between ranks.
\end{problem}

% Let $R$ be the ranks inferred from the path $P_\text{r}$ that cover the IOP of the remaining area $\mathcal{W}_{\text{r}}$. 

% In our proposed approach to solve this,  Moreover, we  the rank replanning step, we compute a set of replanned ranks $R'$ such that its tour can be solved within $\tau$. The rank touring step computes a tour of the ranks in $R'$ within the remainder of the time budget.
\section{Solution Framework}
\label{section:solution-approach}

\begin{figure}
    \centering
    \includegraphics[width=0.9\linewidth]{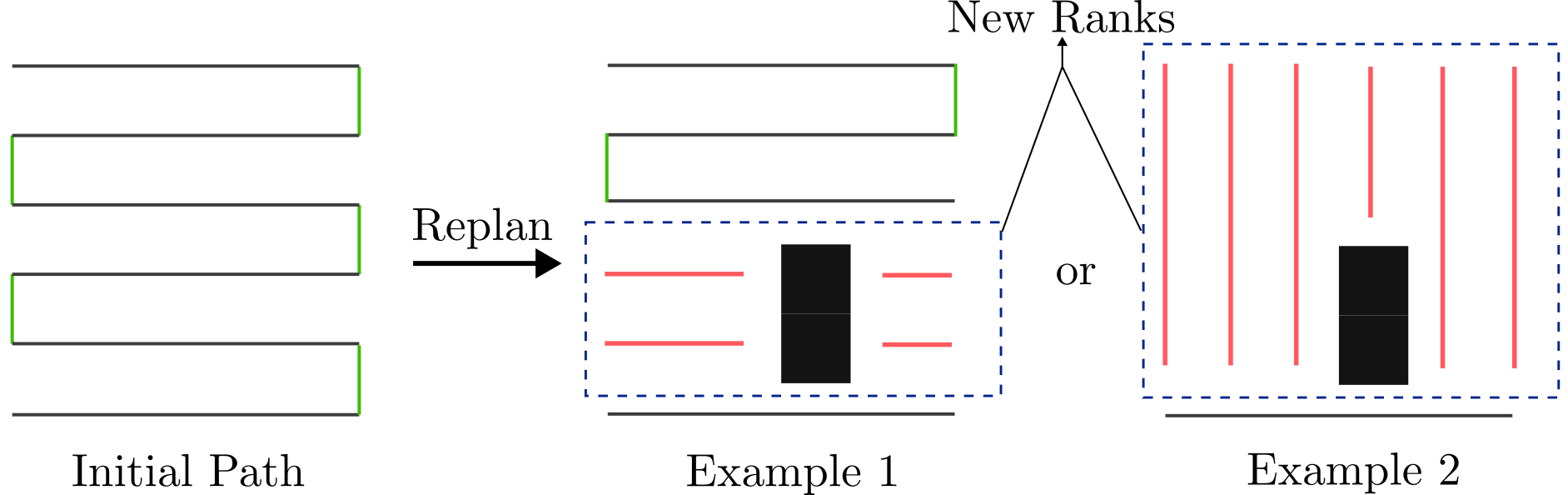}
    \caption{The effect of an obstacle (black box) on replanning a coverage path consisting of ranks (black lines) and transition paths (green lines). The ranks of the initial path (left) when interrupted can be replanned in many ways, two of which are shown here. Example 1 has fewer new ranks (red lines in dotted box) than example 2 and is preferable for shorter planning time. Example 2 has fewer total ranks and is preferable for shorter drive time.
    \vspace{-2mm}}
    \label{fig:eg-rank-change}
\end{figure}

In this section, we describe our solution framework for coverage replanning. Unlike OARP, we look to re-use sections of the initial coverage plan which adds an additional coupling between the rank replanning and touring stages. In this section, we first explore this coupling with an example, and then present the high-level algorithm for the proposed approach.

\subsection{The effect of changing ranks}

We aim to solve rank replanning efficiently so that the majority of the time budget is available for the touring stage, which is formulated as a Generalized Travelling Salesman Problem (GTSP), an NP-hard problem. However, the rank replanning output affects the GTSP input size, and in turn, the runtime. Therefore, we design the rank replanning stage such that the tour of the replanned ranks can be computed within the time budget $\tau$.

To better understand this coupling, consider the example coverage path shown in Fig. \ref{fig:eg-rank-change} with an initial set of horizontal ranks. Let us now introduce an obstacle, for which rank replanning returns two different sets of ranks (example 1 and example 2). Each set contains ranks that are unchanged from the initial path and \textit{new ranks}, i.e., ranks that were not in the initial solution. To reduce the computational complexity of the touring step, we reuse sections of the initial path to cover unchanged ranks. This allows us to spend the time budget on connecting new ranks to the unchanged path sections. As a result, the GTSP problem size is driven by the number of \textit{new ranks}. In Fig. \ref{fig:eg-rank-change}, Example 1's second stage has a GTSP size of 6 sets (4 new ranks + 2 retained paths), whereas Example 2 has a GTSP size of 7 sets (6 new ranks + 1 retained path of one rank) despite having fewer total ranks. Note that the number of unchanged disconnected path sections is also bounded by the number of new ranks.

Let $m$ be the number of total ranks in the replanned solution and $m_{\text{new}} \leq m$ be the number of new ranks. The GTSP input size is dependent on $m_{\text{new}}$ and the estimated GTSP runtime would be a monotonically increasing function $\hat{T}(m_{\text{new}})$. Therefore, given a time budget $\tau$, we bound $m_{\text{new}}$ such that $\hat{T}(m_{\text{new}}) \leq \tau$. In other words, a larger $\tau$ allows us to change more ranks ($m_{\text{new}}$) so that we can minimize the total number of ranks ($m$). To compute this bound, we first determine $\Hat{T}$ using data collected from previous GTSP runs (e.g. data from the offline planning of the initial coverage path). We then set $\Bar{m} = \Hat{T}^{-1}(\tau)$ as the maximum GTSP instance size, where $\Hat{T}^{-1}$ is the inverse of $\Hat{T}$. Thus, we add the following constraint in rank replanning to ensure that the touring stage finishes within time $\tau$:
\begin{align}
    m_{\text{new}} \leq \Hat{T}^{-1}(\tau) = \Bar{m}. \label{eq:bound_on_del_m}
\end{align}

In Section \ref{sec:rank-replan}, we will propose a rank replanning approach that constrains the number of new ranks $m_{\text{new}}$ in its output within a budget $\Bar{m}$ derived from $\tau$. In Section \ref{sec:results-setup}, we provide more details about how $\Hat{T}$ can be determined. 

\subsection{Algorithm}
\label{sec:high-level-approach}

\begin{figure}
    \centering
    \includegraphics[width=0.9\linewidth]{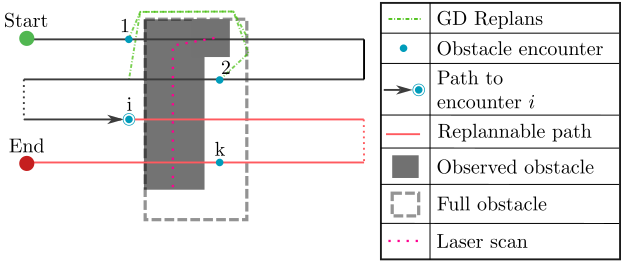}
    \caption{
    A run of OARP-Replan with one obstacle (dotted grey box). \docupdate{The visible part of the obstacle (pink dots) is used to identify blocked grid cells (grey) and obstacle encounters (blue dots) along the current path.} For each encounter $i \leq k$, the planner attempts to replan the remaining path (red path) within the time $\tau_i$ to reach the encounter (black path). To compute $\tau_i$, previous encounters are replanned using GD replan (green dotted lines).
    \vspace{-2mm}}
    \label{fig:example-run}
\end{figure}

We now describe the high-level algorithm of our proposed approach, namely OARP-Replan, for which the pseudocode is given in Algorithm \ref{alg:oarp-replan}. \docupdate{We consider a robot that is equipped with a sensor similar to that of a laser scanner (LIDAR) that is capable of detecting obstacles (or parts of obstacles) in the environment. OARP-Replan is run when potential collisions or \textit{encounters} are detected along the current path $P$ using the robot's sensor model. More specifically, we determine the grid cells in the IOP decomposition that cannot be covered anymore due to newly detected obstacles (if the obstacle was partially observed, this process would be repeated for each scan that reveals more of the obstacle).} Let $k$ be the number of \textit{all} projected \textit{encounters} along $P$ (each obstacle may cause multiple encounters). For each encounter, the robot has the option of replanning the remaining path (the preferred solution) or simply going around the obstacle while covering the interrupted rank (if there is no time to replan). We refer to the latter method as the \textit{greedy detour} (GD) replan, which can be performed using a local planner with little to no computational time and thus no robot stoppage. We propose only using GD replan as a fallback method in the case where the main planner does not have enough time to replan the path.

Fig. \ref{fig:example-run} illustrates an example run of the algorithm. \docupdate{Note that we replan using the obstacle encounters that have been observed so far by the robot. The encounters that were not observed (even from an obstacle that was partially observed) is replanned once the robot has observed them later in the coverage run.} We iterate through each obstacle encounter $i \leq k$ and evaluate whether the main planner can replan the path after the $i^\text{th}$ encounter. The path for all previous obstacle encounters ($\{1, 2, \dots i-1\}$) is replanned using GD replan. In other words, we look for an obstacle encounter that gives the planner enough time to replan the path after the encounter. Let $\tau_i$ be the time for the robot to reach the $i^{\text{th}}$ encounter along the current path, which will be the time budget for the replan. 

To evaluate if the path after encounter $i$ can be replanned, the planner runs a sub-routine named RANK-REPLAN to get a set of replanned ranks for which a tour can be computed in time $\tau_i$ (Line \ref{line:replan}). If RANK-REPLAN returns a set of ranks, we compute a tour of the ranks using another sub-routine called TOUR-REPLAN (Line \ref{line:tour}) that solves the GTSP to obtain our replanned path. This sub-routine runs as a background process while the robot continues to follow the current coverage path. However, if RANK-REPLAN returns an empty set, then this indicates that there is no solution that will finish within the expected time budget. In that case, GD replan is used to replan encounter $i$ (Line \ref{line:backup}), and the procedure is repeated for the next encounter. In the end, we obtain a replanned path where some (or all) encounters are replanned using GD replan, while the remaining path undergoes a near-optimal replan before the robot reaches the corresponding encounter.

\begin{algorithm}
\caption{OARP-Replan}
\label{alg:oarp-replan}
 \begin{algorithmic}[1]
 \renewcommand{\algorithmicrequire}{\textbf{Input:}}
 \renewcommand{\algorithmicensure}{\textbf{Output:}}
 \Require Current path $P$, \docupdate{Environment IOP $\widetilde{\mathcal{W}}$}
 \Ensure Replanned path $P'$
 \For{\docupdate{each encounter $i \leq k$ on $P$}}
 \vspace{0.5mm}
 \State $\mathcal{W}_i \gets$ Part of $\widetilde{\mathcal{W}}$ to cover after $i^\text{th}$ encounter
 \State $R \gets$ Ranks from coverage path $P$
 \State $\tau_i \gets$ Time to reach $i^\text{th}$ encounter
 \State $\Bar{m}_i \gets \hat{T}^{-1}(\tau_i)$ 
 \Comment{Maximum instance size given $\tau_i$}
\State $R' \gets $ RANK-REPLAN($\mathcal{W}_i$, $R$, $\Bar{m}_i$) \label{line:replan}
\If{$R' \neq \emptyset$}
    \State $P' \gets $ TOUR-REPLAN($R'$, $R$) \label{line:tour}
    \State \Return $P'$
\Else
\State Apply GD replanning to $i^{th}$ encounter \label{line:backup}
\EndIf
\EndFor
\end{algorithmic}
\end{algorithm}

In our implementation, we run OARP-Replan using the path that the robot is \textit{currently executing}, i.e. if a new obstacle is detected before $P'$ is returned, we start a new replan of $P$. In the rest of the paper, we will expand on how the RANK-REPLAN and TOUR-REPLAN sub-routines work. 
\section{The RANK-REPLAN Subroutine}
\label{sec:rank-replan}

In this section, we extend the \docupdate{integer linear program (ILP)} from the OARP approach in our previous work \cite{rameshOptimalPartitioningNonConvex2022a} to formulate the RANK-REPLAN subroutine that minimizes the number of replanned ranks while constraining the number of new ranks within a budget $\Bar{m}$. We first provide a brief review of the original \docupdate{ILP} with some additional variables to allow for extension. Then, we propose two extensions to this \docupdate{ILP}: the first extension solves the rank replanning step optimally, and the second extension solves it efficiently with fewer \docupdate{ILP} variables (no optimality guarantees). The user may choose either \docupdate{ILP} based on their replanning preferences (optimality vs efficiency). 

\subsection{The base \docupdate{ILP}}
\label{sec:base-milp}

Consider an integral orthogonal polygon (IOP) containing $n$ grid cells representing the input environment, where each cell $c_i$ for $1 \leq i \leq n$ is of size $l \times l$ ($l$ is the tool width). The goal of the \docupdate{ILP} is to assign a \textit{coverage orientation} (horizontal or vertical) for each cell $c_i$ such that the number of ranks covering the cells is minimized. We obtain ranks by merging \textit{similarly-oriented neighboring} grid cells. To this end, we introduce variables to represent cell orientations and count the resulting ranks after all merges are complete.

\begin{figure}
    \centering
    \includegraphics[width=\linewidth]{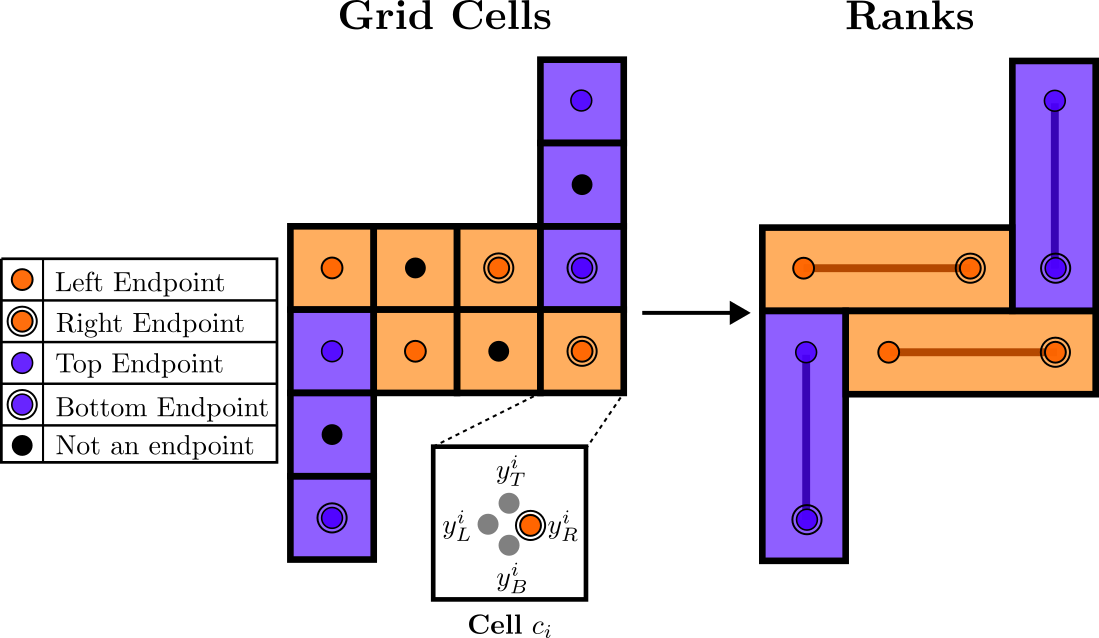}
    \caption{Integral orthogonal polygon (IOP) of an environment where each grid cell is either oriented horizontally (orange) or vertically (purple). The left, right, top, and bottom rank endpoints are also shown (see legend). Each cell $c_i$ (zoomed-in cell) has four endpoint indicator variables. If the cell is an endpoint (e.g. $c_i$ is a right endpoint), the corresponding variable is active ($y_R^i = 1$).
    \vspace{-2mm}}
    \label{fig:all-endpoints}
\end{figure}

Let $x_H^i$ and $x_V^i$ be binary variables that denote whether cell $c_i$ is oriented \textit{horizontally} or \textit{vertically} (cannot be both). Fig. \ref{fig:all-endpoints} shows an example IOP with oriented grid cells and their corresponding ranks. We also add variables to identify the rank \textit{endpoints}. Let $y_L^i$ and $y_R^i$ be binary variables that indicate whether the cell, $c_i$, is the \textit{left} endpoint and/or the \textit{right} endpoint of a horizontal rank. Similarly, let $y_T^i$ and $y_B^i$ be binary variables that indicate whether $c_i$ is a \textit{top} and/or \textit{bottom} endpoint of a vertical rank. This notation is different from the original OARP approach, which only has indicator variables for the left and top endpoints. Fig. \ref{fig:all-endpoints} also illustrates the endpoint variables and rank endpoints for the oriented grid cells. 

Let $\boldsymbol{x_H}, \boldsymbol{x_V}, \boldsymbol{y_L}, \boldsymbol{y_R}, \boldsymbol{y_T}, \boldsymbol{y_B}$ be $n$ dimensional binary vectors of the corresponding variables for all grid cells. Let $\boldsymbol{y_H} = \begin{bmatrix}
\boldsymbol{y_L}^T & \boldsymbol{y_R}^T
\end{bmatrix}^T$ and $\boldsymbol{y_V} = \begin{bmatrix}
\boldsymbol{y_T}^T & \boldsymbol{y_B}^T
\end{bmatrix}^T$ be the vectors of all horizontal rank and vertical rank endpoints respectively, where $\boldsymbol{y_H}, \boldsymbol{y_V} \in \{0,1\}^{2n}$. Since each rank has exactly two endpoints, the number of coverage ranks $m$ is given by half the sum of all rank endpoints:
\begin{align}
    m = \frac{1}{2} (\boldsymbol{1}^T \boldsymbol{y_H} + \boldsymbol{1}^T \boldsymbol{y_V}),
    \label{eq:m_original}
\end{align} 
where $\boldsymbol{1}$ is the column vector of ones. This constitutes the objective of the optimization problem.

To formulate the \docupdate{ILP} constraints, we must encode the relationship between the orientations ($x$ variables) and endpoints ($y$ variables). Following from our previous work \cite{rameshOptimalPartitioningNonConvex2022a}, we use directed graphs in the IOP to detect whether a neighboring cell can be merged. Let $G_L$ and $G_R$ be directed graphs composed of all horizontal path flows for each grid cell from its left and right neighbors respectively. For cells at the IOP border, we add artificial nodes called \textit{border identifiers} to simulate path flow into those cells. Similarly, let $G_T$ and $G_B$ be the graphs composed of all vertical path flows for each grid cell from its top and bottom neighbors respectively. \docupdate{Fig. \ref{fig:directed_graphs} illustrates these graphs for an example IOP.} From these graphs, let $A_L, A_R, A_T, $ and $A_B$ be matrices representing the corresponding graphs, where each \textit{row} of a matrix signifies a directed edge in the graph, with a $-1$ for a source grid cell (outgoing edge), a $+1$ for a sink grid cell (incoming edge), and $0$s otherwise. These matrices resemble the node-arc incidence (NAI) matrices for the graphs $G_L, G_R, G_T, $ and $G_B$.

We now pose the rank minimization \docupdate{ILP}, namely \docupdate{ILP}-0, that we will extend in the rest of this section:
\begin{align}
\nonumber\text{\textbf{\docupdate{ILP}-0:}} \hspace{5mm} & \\
\min_{\boldsymbol{x_H}, \boldsymbol{x_V}, \boldsymbol{y_H}, \boldsymbol{y_V}} \quad & m \label{eq:obj-0}\\
\textrm{s.t.} \quad & \begin{bmatrix}
A_L^T & A_R^T
\end{bmatrix}^T\boldsymbol{x_H} - 
\boldsymbol{y_H}
\leq \boldsymbol{0} \label{eq:c1-new}\\
 & \begin{bmatrix}
A_T^T & A_B^T
\end{bmatrix}^T\boldsymbol{x_V} -
\boldsymbol{y_V}
\leq \boldsymbol{0} \label{eq:c2-new} \\
& \boldsymbol{x_H} + \boldsymbol{x_V} = \boldsymbol{1} \label{eq:c3}\\
% \nonumber \text{Eqs. (\ref{eq:c1-new}), (\ref{eq:c2-new}), (\ref{eq:c3}),} \\
& \boldsymbol{x_H}, \boldsymbol{x_V} \in \{0, 1\}^{n} \\
& \boldsymbol{y_H}, \boldsymbol{y_V} \in \{0, 1\}^{2n}.
\end{align}
In this \docupdate{ILP}, equations (\ref{eq:c1-new}) and (\ref{eq:c2-new}) ensure that, for an \textit{optimal} solution of \docupdate{ILP}-0, vectors $\boldsymbol{y_H}$ and $\boldsymbol{y_V}$ correspond to the endpoints of the optimal number of ranks covering the IOP. \docupdate{This follows from the minimization of $m$, which in-turn minimizes $\boldsymbol{y_H}$ and $\boldsymbol{y_V}$ (Eq. (\ref{eq:m_original})), thereby tightening equations (\ref{eq:c1-new}) and (\ref{eq:c2-new}).} Equation (\ref{eq:c3}) ensures that each cell is only assigned one orientation (can be either horizontal or vertical). For an in-depth discussion about the formulation of these constraints, we refer the reader to \cite{rameshOptimalPartitioningNonConvex2022a}. 

\begin{figure}
    \centering
    \includegraphics[width=\linewidth]{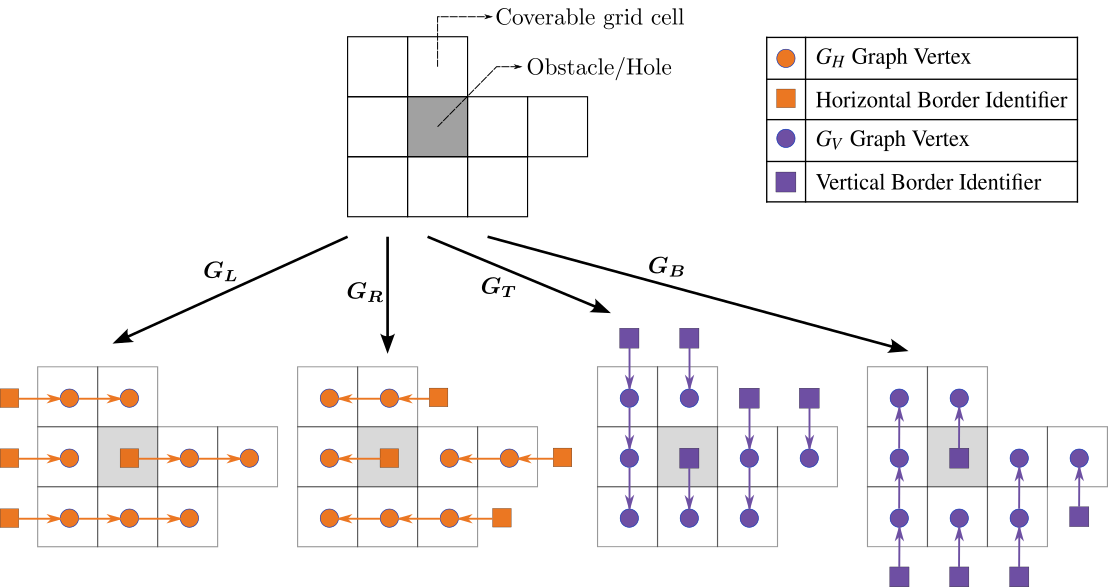}
    \caption{\docupdate{The directed graphs $G_L, G_R, G_T, $ and $G_B$ for the example IOP.}
    \vspace{-2mm}}
    \label{fig:directed_graphs}
\end{figure}

\subsection{Constraining new ranks}
\label{sec:rank-constraint}

We now look to constrain the number of new ranks $m_\text{new}$ to be within the rank budget $\Bar{m}$. To compute $m_\text{new}$, we identify the new ranks in the replanned solution by comparing them with the ranks from the initial path. This comparison requires an encoding of ranks using the \docupdate{ILP} variables. To this end, we propose an encoding based on perfect matching \cite{korteCombinatorialOptimizationTheory2006} that matches the endpoints for each rank in the \docupdate{ILP} solution. The new ranks are therefore the difference between the old and new endpoint matchings.

To simplify the explanation of the encoding, we first look at the horizontal case and simply extend the same ideas to the vertical case. Consider a horizontal row of \textit{connected} (unobstructed) grid cells in the IOP as depicted in Fig. \ref{fig:exact-measure-graph}. Now, consider the case where the row contains a horizontal rank with a left endpoint at a cell $c_i$. This indicates that the rank either (i) covers cells to the right of $c_i$ or, (ii) covers only $c_i$. In both cases, we observe that there must exist a cell to the right of $c_i$ (including $c_i$) that is the right endpoint of this rank. We look to identify this rank by matching its left and right endpoints.

We formulate this with variables that represent directed edges added between $c_i$ and every cell $c_r$ to its right, as any of them could be a right endpoint. We also add an edge from $c_i$ to itself to account for a single-cell rank. For each edge between $c_i$ and a cell $c_r$, we add a binary variable $z_{ir} \in \{0,1\}$ that takes a value of $1$ if the edge is included in the matching (i.e. $c_r$ is a right endpoint). To ensure that $c_i$ and $c_r$ are matched only if they are left and right endpoints respectively, we enforce the following constraints: 
\begin{align}
    &\sum_{\forall r}z_{ir} = y_L^i, \\
    &\sum_{\forall i}z_{ir} = y_R^r,\\
    &z_{ir} \in \{0,1\} \quad \forall 1 \leq i, j \leq n.
\end{align}

\begin{figure}
    \centering
    \includegraphics[width=0.9\linewidth]{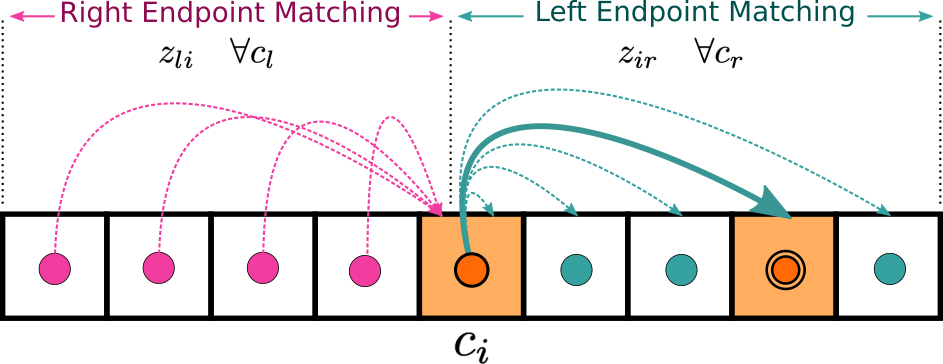}
    \caption{The matching edges used to detect ranks for a given row of grid cells. Each cell $c_i$ has edges that match it with another cell in the row if $c_i$ is a right (\textit{using magenta edges}) or left (\textit{using cyan edges}) endpoint. Here, we illustrate an example where $c_i$ is a left endpoint and is matched with a right endpoint in the row (bold cyan edge). No matching occurs if $c_i$ is not an endpoint.}
    \label{fig:exact-measure-graph}
\end{figure}

\begin{lemma}
    Consider a horizontal row of connected IOP grid cells and two cells $c_i$ and $c_r$ in this row where $c_r$ is to the right of $c_i$. Then there exists a horizontal rank with its left endpoint at $c_i$ and right endpoint at $c_r$ iff $z_{ir} = 1$.
    \label{lem:rank-z-correlation}
\end{lemma}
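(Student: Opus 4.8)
The plan is to prove the biconditional by reducing it to a statement about the uniqueness of a bipartite matching, and then establishing that uniqueness from the interleaving structure of the rank endpoints within the row. I would work entirely inside the fixed row of connected cells. There the orientation assignment (the $\boldsymbol{x_H}, \boldsymbol{x_V}$ variables) determines the maximal contiguous runs of horizontally-oriented cells, and these runs are exactly the horizontal ranks of the row. Listing them left to right as $\rho_1,\dots,\rho_p$, rank $\rho_j$ has a left endpoint at column $\ell_j$ and a right endpoint at column $r_j$. Because two consecutive ranks must be separated by at least one vertically-oriented cell (otherwise they would merge into a single rank, and the row is unobstructed so nothing else can separate them), the endpoints interleave as $\ell_1 \le r_1 < \ell_2 \le r_2 < \cdots < \ell_p \le r_p$. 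By the role the endpoint variables play in MILP-0, $y_L^i = 1$ exactly on the columns $\ell_1,\dots,\ell_p$ and $y_R^r = 1$ exactly on $r_1,\dots,r_p$; degenerate single-cell ranks correspond to indices with $\ell_j = r_j$ and are carried by the self-edges.

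Next I would read the three matching constraints as defining a bipartite matching. The equalities $\sum_r z_{ir} = y_L^i$ and $\sum_i z_{ir} = y_R^r$ force every non-endpoint cell to carry no matching edge, force every left endpoint to be matched to exactly one cell, and force every right endpoint to be matched from exactly one cell. Since the $z$ edges only run from a cell to cells at or to its right, $z$ is a perfect matching between the left-endpoint set $\{\ell_j\}$ and the right-endpoint set $\{r_j\}$ (equal in size, $p$ each) in which every matched pair $(\ell,r)$ obeys $\ell \le r$ in column order.

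The crux, and the step I expect to be the main obstacle, is showing that this column-monotone perfect matching is unique and equals the natural pairing $\ell_j \mapsto r_j$. I would argue by induction on $p$ using the leftmost right endpoint: since $r_1 < \ell_2 < \cdots < \ell_p$, the only left endpoint whose column is at most $r_1$ is $\ell_1$, so the single edge feeding $r_1$ can originate only at $\ell_1$, forcing $z_{\ell_1 r_1}=1$. Deleting this pair leaves an interleaved configuration on $\{\ell_2,\dots,\ell_p\}$ and $\{r_2,\dots,r_p\}$ satisfying the same hypotheses, so the inductive step applies. This yields $z_{ir}=1$ if and only if $(i,r)=(\ell_j,r_j)$ for some $j$, that is, if and only if $c_i$ and $c_r$ are the left and right endpoints of one and the same rank.

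Finally I would read both directions of the lemma off this characterization. If $z_{ir}=1$, the matching constraints give $y_L^i = y_R^r = 1$ and, by uniqueness, $(i,r)$ is one of the pairs $(\ell_j,r_j)$, so $\rho_j$ is a horizontal rank with left endpoint $c_i$ and right endpoint $c_r$. Conversely, if some horizontal rank has left endpoint $c_i$ and right endpoint $c_r$, then $(i,r)=(\ell_j,r_j)$ for that rank, and since every feasible assignment of the $z$ variables must coincide with the unique monotone matching, $z_{ir}=1$. The vertical analogue follows verbatim by applying the same argument to a column of cells with top and bottom endpoints.
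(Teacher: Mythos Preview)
Your argument is correct and rests on the same combinatorial fact as the paper's: the interleaving $\ell_1 \le r_1 < \ell_2 \le r_2 < \cdots < \ell_p \le r_p$ of rank endpoints within a connected row forces the bipartite matching encoded by the $z$ variables to be the natural pairing $\ell_j \mapsto r_j$. The paper, however, does not set up this global picture. It handles the two directions separately by contradiction, each time reducing via a ``WLOG'' to an extremal configuration (the leftmost rank in the row, or the unique left endpoint to the left of a given right endpoint) and exhibiting an unmatched endpoint. Those WLOG reductions are exactly the base case of your induction in disguise. Your packaging as a single uniqueness-of-matching statement, peeled off from the leftmost right endpoint, is more systematic and avoids the informal WLOG reasoning; it also makes explicit why the self-edges handle single-cell ranks. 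The paper's version is slightly more concrete about which cell is left unmatched in each contradiction, which may read more intuitively to someone unfamiliar with matching arguments. Either way the proofs are equally short.
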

\begin{proof}
    ($\Rightarrow$ direction) If there exists a rank between $c_i$ and $c_r$, then $y_L^i = 1$ and $y_R^r = 1$. As a result, we have the following for $c_i$:
    \begin{align*}
    \sum_{\forall r}z_{ir} = 1, \quad \text{and} \quad \sum_{\forall i}z_{ir} = 1.
    \end{align*}
    Assume that $z_{ir} = 0$ (see Fig. \ref{fig:proof-setup}(a)). This indicates that $c_i$ must be matched with another right endpoint $c_k$ to its right, i.e. $z_{ik} = 1$. Since there exists a rank between $c_i$ and $c_r$, $c_k$ cannot be between the two cells. Therefore, $c_k$ must be to the right of $c_r$. WLOG, let us assume that the rank between $c_i$ and $c_r$ is the left-most horizontal rank in the row, meaning that there are no other left endpoints for $c_r$ to match with. This is an infeasible solution for the set of constraints unless $z_{ir} = 1$, thereby proving the statement.
    
    % However, this means that $c_k$ is the right endpoint of another rank in the same row, indicating that there must be at least one other left endpoint $c_m$ to the right of $c_j$. WLOG, let us assume that there are no other endpoints in the row, meaning that there is a rank between $c_m$ and $c_k$. As a result, $c_m$ and $c_j$, they remain without a matching, leading to an infeasible solution. The only feasible solution is if $z_{ij} = 1$ and $z_{mk} = 1$

    ($\Leftarrow$ direction) If $z_{ir} = 1$, then $y_L^i$ and $y_R^r$ have to both be $1$, indicating that they are left and right endpoints respectively. However, consider the case where there is a vertically-oriented cell between them, indicating that there is no horizontal rank between $c_i$ and $c_r$ (see Fig. \ref{fig:proof-setup}(b)). In this case, there must exist another cell $c_k$ that is a right endpoint (to the left of the vertical cell), where $y_R^k = 1$. WLOG, let us assume that $c_i$ is the only endpoint to the left of $c_k$. This indicates that $c_k$ has no other left endpoint to match with, which in turn gives an infeasible solution unless $z_{ik} = 1$ and $z_{ir} = 0$. This contradicts the fact that $z_{ir} = 1$, and therefore shows that there must exist a rank between $c_i$ and $c_r$ if $z_{ir} = 1$. This proves the lemma.
\end{proof}

\begin{figure}
\centering
\subcaptionbox{$\Rightarrow$ direction}
{\centering
\includegraphics[width=0.8\linewidth]{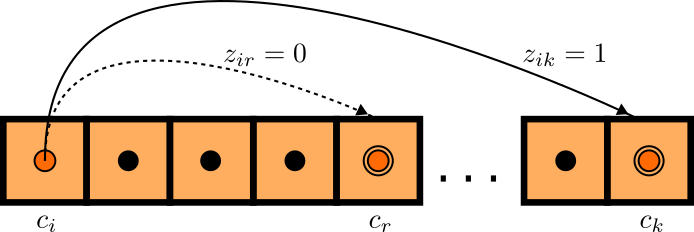}
}
\par\bigskip
\subcaptionbox{$\Leftarrow$ direction}
{\centering
\includegraphics[width=0.8\linewidth]{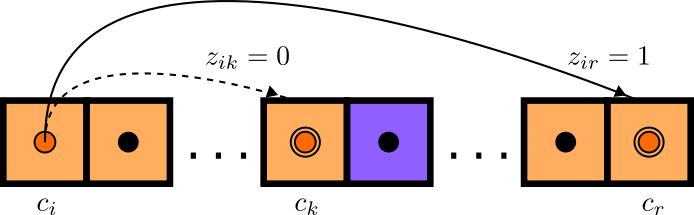}
}
\caption{Illustrations of the cases considered for Lemma \ref{lem:rank-z-correlation}'s proof.
\vspace{-2mm}}
\label{fig:proof-setup}
\end{figure}

Following Lemma \ref{lem:rank-z-correlation}, we now have a way of representing a horizontal rank as a matching of its endpoints. We extract each horizontal row of connected grid cells in the given IOP and add the following constraints for each cell $c_i$ in the row:
\begin{align}
    &\sum_{\forall r}z_{ir} = y_L^i, \quad \text{where $c_r$ is to the right of $c_i$}, \label{eq:z_left} \\
    &\sum_{\forall l}z_{li} = y_R^i, \quad \text{where $c_l$ is to the left of $c_i$}, \label{eq:z_right} \\
    &z_{ij} \in \{0,1\} \quad \forall 1 \leq i, j \leq n. \label{eq:z_h_bounds}
\end{align}

We simply extend the setup to represent vertical ranks as a matching of its endpoints in a vertical row of connected grid cells. For each cell $c_i$ in the vertical row, we add the following constraints:
\begin{align}
    &\sum_{\forall b}z_{ib} = y_T^i, \quad \text{where $c_b$ is to the bottom of $c_i$}, \label{eq:z_top} \\
    &\sum_{\forall t}z_{ti} = y_B^i, \quad \text{where $c_t$ is to the top of $c_i$}\label{eq:z_bottom} \\
    &z_{ij} \in \{0,1\} \quad \forall 1 \leq i, j \leq n. \label{eq:z_v_bounds}
\end{align}

Now, let $\Bar{z}_{ij}$ be an input to the \docupdate{ILP} such that $\Bar{z}_{ij}$ is $1$ if the initial path contains a rank with cells $c_i$ and $c_j$ as endpoints and $0$ otherwise. From Lemma \ref{lem:rank-z-correlation}, if the \docupdate{ILP} solution has a rank with $c_i$ and $c_j$ as endpoints ($z_{ij} = 1$), then it is a new rank if $\Bar{z}_{ij} = 0$. Using this relation for all pairs of cells $c_i$ and $c_j$ with matching edges, we compute $m_\text{new}$ as follows:
\begin{align}
    m_\text{new} = \sum_{\forall i}\sum_{\forall j} (1 - \Bar{z}_{ij}) z_{ij}. \label{eq:m_new_exact}
\end{align}

We now propose our extended \docupdate{ILP}, namely \docupdate{ILP}-1, that minimizes the number of ranks in the replanned solution while constraining the number of new ranks to be within a budget $\Bar{m}$. We also add the term $\epsilon \text{ } m_\text{new}$ to the \docupdate{ILP}'s objective function, where $\epsilon > 0$ is used to compute the rank-minimizing solution with the smallest $m_\text{new}$. \docupdate{Specifically, in the case of a large budget $\Bar{m}$, the additional term helps prevent rank changes that do not improve the number of ranks in the solution.} The resulting \docupdate{ILP} is as follows:
\begin{align}
\nonumber \text{\textbf{\docupdate{ILP}-1:}} \hspace{5mm} & \\
\min_{\boldsymbol{x_H}, \boldsymbol{x_V}, \boldsymbol{y_H}, \boldsymbol{y_V}} \quad & m + \epsilon \text{ } m_\text{new} \label{eq:obj-1}\\
\textrm{s.t.} \quad
& \nonumber \text{Eqs. (\ref{eq:c1-new}), (\ref{eq:c2-new}), (\ref{eq:c3}),} \\
& \nonumber \text{Eqs. (\ref{eq:z_left}), (\ref{eq:z_right}), (\ref{eq:z_top}) (\ref{eq:z_bottom}),} \quad \forall 1 \leq i \leq n,\\
& m_\text{new} \leq \Bar{m}, \label{eq:m_bound}\\
& z_{ij} \in \{0,1\} \quad \forall 1 \leq i, j \leq n, \label{eq:z_bounds} \\
& \boldsymbol{x_H}, \boldsymbol{x_V} \in \{0, 1\}^{n}, \\
& \boldsymbol{y_H}, \boldsymbol{y_V} \in \{0, 1\}^{2n}.
\end{align}
We now show that we can configure \docupdate{ILP}-1 to solve the rank replanning stage (Problem \ref{problem:rank-based-replanning}(i)) \textit{optimally}, i.e. the solution (i) minimizes the number of ranks $m$, (ii) subject to the constraint that $m_\text{new} \leq \Bar{m}$.
\begin{proposition}
    If $\epsilon < 1/\Bar{m}$, then \docupdate{ILP}-1 computes optimal solutions to Problem \ref{problem:rank-based-replanning}(i).    
    \label{prop:optimality}
\end{proposition}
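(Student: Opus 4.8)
The plan is to show that the penalty term $\epsilon\, m_\text{new}$ acts purely as a lexicographic tie-breaker: it never leads MILP-1 to trade an increase in the primary quantity $m$ for a decrease in $m_\text{new}$. The argument rests on two observations. First, both $m$ and $m_\text{new}$ are integer-valued: $m$ is half the sum of the binary endpoint variables, which is even since every rank has exactly two endpoints, and $m_\text{new}$ in \eqref{eq:m_new_exact} is a sum of products of $\{0,1\}$-valued terms. Second, by the budget constraint \eqref{eq:m_bound} every feasible solution satisfies $0 \le m_\text{new} \le \bar m$, so the penalty is bounded as $0 \le \epsilon\, m_\text{new} \le \epsilon \bar m < 1$ whenever $\epsilon < 1/\bar m$.

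With these in hand, I would argue by contradiction that any optimal solution of MILP-1 attains the minimum possible number of ranks over the feasible set. Suppose an optimal solution uses $m^*$ ranks, yet some feasible solution uses $m' < m^*$ ranks; since both are integers, $m' \le m^* - 1$. Comparing objective values gives
\begin{align*}
m' + \epsilon\, m'_\text{new} < m' + 1 \le m^* \le m^* + \epsilon\, m^*_\text{new},
\end{align*}
so the feasible solution has strictly smaller objective, contradicting optimality. Hence the optimal $m$ of MILP-1 equals the minimum number of ranks achievable subject to $m_\text{new} \le \bar m$, which is exactly the optimality requirement for Problem \ref{problem:rank-based-replanning}(i): minimize $m$ subject to $m_\text{new} \le \bar m$. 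Finally, since $\epsilon > 0$, among all solutions attaining this minimal $m$ the objective is strictly smallest for the one with the fewest new ranks, matching the intended selection of the rank-minimizing solution with smallest $m_\text{new}$.

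The heavy lifting has already been done upstream: the base-MILP constraints \eqref{eq:c1-new}--\eqref{eq:c3} guarantee that feasible $(\boldsymbol{y_H}, \boldsymbol{y_V})$ encode valid rank coverings of the IOP and that $m$ counts them correctly, while Lemma \ref{lem:rank-z-correlation} guarantees that the matching variables $z_{ij}$ correctly identify ranks so that \eqref{eq:m_new_exact} truly counts the new ranks. Given these facts, the proposition reduces to the scaling estimate above. I expect the only delicate point to be the bookkeeping of the integrality of $m$ and $m_\text{new}$, so that the ``gap of at least one rank'' step is valid; the hypothesis $\epsilon < 1/\bar m$ is precisely what forces the penalty to be strictly smaller than that unit gap.
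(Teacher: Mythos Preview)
Your proof is correct and follows essentially the same approach as the paper: both exploit the integrality of $m$ and the bound $\epsilon\, m_\text{new} \le \epsilon \bar m < 1$ to show, by contradiction, that the penalty cannot cause MILP-1 to prefer a solution with more ranks. Your presentation is slightly more direct (bounding the penalty term upfront rather than manipulating the difference $\epsilon(m_\text{new}^* - m_\text{new}^{**}) \ge 1$), but the argument is the same.
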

\begin{proof}
    Based on our criteria of optimality, Problem \ref{problem:rank-based-replanning}(i) can be simplified as follows:
    \begin{align*}
        \nonumber \text{\textbf{P1:}} \quad \min \quad & m \\
    \textrm{s.t.} \quad & m_\text{new} \leq \Bar{m}.
    \end{align*}
    
    Any solution to \docupdate{ILP}-1 must satisfy Eq. \ref{eq:m_bound} and is therefore a feasible solution to P1. What remains to be shown is whether \docupdate{ILP}-1 computes the solution with minimum $m$, i.e., there exists an $\epsilon > 0$ such \docupdate{ILP}-1 returns an optimizer to P1. To show this, consider the simplified version of \docupdate{ILP}-1:
    \begin{align*}
        \nonumber \text{\textbf{P2:}} \quad \min \quad & m + \epsilon \text{ } m_\text{new}\\
    \textrm{s.t.} \quad & m_\text{new} \leq \Bar{m}.
    \end{align*}
    Note that since the constraints of P1 and P2 are the same, they share the same feasible solution space. Let $m^*$ and $m^{**}$ be the number of ranks corresponding to the optimizers to P1 and P2 respectively, where $m_\text{new}^*$ and $m_\text{new}^{**}$ are the corresponding values of $m_\text{new}$. Clearly, $m^* \leq m^{**}$.

    Let us assume, by way of contradiction, that $m^* \neq m^{**}$. Since $m$ is an integer, we have $m^{**} - m^* \geq 1$. Since the optimizer of P1 is feasible for P2, we have the following:
    \begin{align*}
        m^* + \epsilon \text{ } m_\text{new}^* &\geq m^\text{**} + \epsilon \text{ } m_\text{neq}^{**} \\
        \epsilon \text{ } (m_\text{new}^* - m_\text{new}^\text{**}) &\geq m^\text{**} - m^\text{*} \geq 1.
    \end{align*}
    From the constraints of P1 and P2, we have the following:
    \begin{align*}
        (m_\text{new}^* - m_\text{new}^\text{**}) &\leq m_\text{new}^* \leq \Bar{m}.
    \end{align*}
    Combining the inequalities, we get:
    \begin{align*}
        \epsilon \text{ } \Bar{m} &\geq 1 \\
        \epsilon &\geq 1/\Bar{m}.
    \end{align*}
    Following this, for values of $\epsilon < 1/\Bar{m}$, our assumption that $m^* \neq m^{**}$ is contradicted, which proves our proposition. We choose $\epsilon = 1/(\Bar{m}+1)$ for the rest of this work.
\end{proof}
\begin{remark*}[Linear Relaxation]
\label{remark:linear-relaxation}
    The linear relaxation of \docupdate{ILP}-1 is obtained by replacing the integrality constraints for the decision variables \docupdate{with real-valued constraints as follows:
    \begin{align*}
        &0 \leq z_{ij} \leq 1 \quad \forall 1 \leq i, j \leq n,\\
        &\boldsymbol{0} \leq \boldsymbol{y_H}, \boldsymbol{y_V} \leq \boldsymbol{1}, \\
        &\boldsymbol{x_H}, \boldsymbol{x_V} \geq \boldsymbol{0}.
    \end{align*}}
    We observed that solving this relaxation consistently gives integral solutions despite the constraints not being totally unimodular (TUM). \docupdate{Our reasoning for this is that (i) the constraints of \docupdate{ILP}-0 is TUM (following the proof from \cite{rameshOptimalPartitioningNonConvex2022a}), and (ii) perfect matching on a bipartite graph (e.g. graph partitioned as left endpoints ($y_L^i$ variables) and right endpoints ($y_R^i$ variables)) can be solved using a linear program (LP) \cite{korteCombinatorialOptimizationTheory2006}.} For our experiments, we simply solve the linear relaxation as LPs can be solved in polynomial time \cite{karmarkarNewPolynomialtimeAlgorithm1984}.
\end{remark*}

\subsection{On reducing \docupdate{ILP} variables}
\label{sec:milp-reducing-variables}

In the previous section, we formulated \docupdate{ILP}-1 which solves the rank replanning step optimally. However, one drawback of \docupdate{ILP}-1 is that it adds a large number of decision variables ($\mathcal{O}(n^2)$) to the base \docupdate{ILP}, which affects the runtime of the \docupdate{ILP}. In this section, we propose another \docupdate{ILP} extension, namely \docupdate{ILP}-2, that uses fewer decision variables and can be more successfully used for large environments. \docupdate{Instead of matching rank endpoints to count the number of new ranks $m_\text{new}$, we formulate a bound for $m_\text{new}$ using the \textit{number of rank endpoints} that have changed. Specifically, we look to identify and count two types of changes: (i) \textit{added} endpoints and (ii) \textit{extended} endpoints (i.e. endpoints removed when ranks from the initial solution are merged). Fig. \ref{fig:all-rank-changes} illustrates cases where the ranks are changed with the introduction of obstacles and the corresponding added and extended rank endpoints.}

We first analyze the horizontal ranks to present our approach and extend the same approach to vertical ranks. Consider a binary vector $\boldsymbol{\Bar{y}_H} \in \{0,1\}^{2n}$ that indicates if a cell was a horizontal (left or right) endpoint for a rank in the initial path. We take this vector as an input to the \docupdate{ILP}. We look to compare vectors $\boldsymbol{y_H}$ and $\boldsymbol{\Bar{y}_H}$ to identify added and extended endpoints.

An \textit{added} endpoint represents an endpoint in $\boldsymbol{y_H}$ but not in $\boldsymbol{\Bar{y}_H}$. This helps count the first four of the five possible rank changes illustrated in Fig. \ref{fig:all-rank-changes}, as each rank after the change has at least one added endpoint. Let $\alpha_H \in \mathbb{R}$ be the number of added horizontal endpoints, which we compute as follows:
\begin{align*}
\vspace{-10mm}
    \alpha_H = (\boldsymbol{1} - \boldsymbol{\Bar{y}_H})^T \boldsymbol{y_H}.
\end{align*}

However, in the case of a merged rank (Fig. \ref{fig:all-rank-changes}), the new rank may not contain any added endpoints. To account for this, we also count \textit{extended} endpoints that count cells that were endpoints in $\boldsymbol{\Bar{y}_H}$ and are not endpoints in $\boldsymbol{y_H}$ \textit{while still maintaining the same coverage orientation} (horizontal in this case). Suppose a cell $c_i$ was a left endpoint in $\boldsymbol{\Bar{y}_H}$, i.e. $\Bar{y}_L^i = 1$. The cell is considered an extended endpoint if $y_L^i = 0$ and the cell is still covered horizontally ($x_H^i = 1$). We use the following piece-wise equation for cell $c_i$ to capture this relationship.
\begin{align*}
\vspace{-10mm}
    \quad \Bar{y}_L^i(1 - y_L^i) + (x_H^i - 1) = \begin{cases*} 
    \hspace{1mm} 1 & \parbox[t]{.2\textwidth}{\raggedright if $c_i$ is an extended \\ left endpoint} \\[0.1ex]
    \hspace{1mm} \leq 0 & otherwise
    \end{cases*}
\end{align*}
Now, let $\boldsymbol{\beta_H}$ be a vector of binary variables that represents whether each cell is an extended horizontal endpoint. We use this to compute a value $\beta_H$ that counts the number of extended horizontal endpoints as follows:
\begin{align*}
\vspace{-10mm}
    &\beta_H = \boldsymbol{1}^T \boldsymbol{\beta_H}, \\
    \text{where } &\boldsymbol{\beta_H} \geq \boldsymbol{\Bar{y}_H} \circ (\boldsymbol{1} - \boldsymbol{y_H}) + (
    \begin{bmatrix}
        \boldsymbol{x_H}^T & \boldsymbol{x_H}^T
    \end{bmatrix}^T - \boldsymbol{1}), \\
    &\boldsymbol{\beta_H} \in \{0,1\}^{2n},
\end{align*}
where the $\circ$ operator is the element-wise product of two vectors. \docupdate{The endpoint changes that remain to be counted are the cells that were endpoints in $\boldsymbol{\Bar{y}_H}$ and are not endpoints in $\boldsymbol{y_H}$ \textit{while changing coverage orientations} (to vertical in this case). However, such a cell will now be a part of a vertical rank that would either have an added or extended \textit{vertical} rank endpoint. Therefore, we do not count these endpoints as it tracks rank changes that are already accounted for.}

We extend these concepts to the vertical case to count all endpoint changes. Let $\boldsymbol{\Bar{y}_V}$ be a binary vector indicating whether a cell was a vertical endpoint for a rank in the initial path. The total number of added endpoints $\alpha$ and extended endpoints $\beta$ is then obtained as follows:
\begin{align}
    &\alpha = (\boldsymbol{1} - \boldsymbol{\Bar{y}_H})^T \boldsymbol{y_H} + (\boldsymbol{1} - \boldsymbol{\Bar{y}_V})^T \boldsymbol{y_V} \label{eq:alpha_final} \\
    &\beta = \boldsymbol{1}^T \boldsymbol{\beta_H} + \boldsymbol{1}^T \boldsymbol{\beta_V}, \label{eq:beta_final} \\
    \text{where } &\boldsymbol{\beta_H} \geq \boldsymbol{\Bar{y}_H} \circ (\boldsymbol{1} - \boldsymbol{y_H}) + (
    \begin{bmatrix}
        \boldsymbol{x_H}^T & \boldsymbol{x_H}^T
    \end{bmatrix}^T - \boldsymbol{1}), \label{eq:beta_h_cons} \\
    &\boldsymbol{\beta_V} \geq \boldsymbol{\Bar{y}_V} \circ (\boldsymbol{1} - \boldsymbol{y_V}) + (
    \begin{bmatrix}
        \boldsymbol{x_V}^T & \boldsymbol{x_V}^T
    \end{bmatrix}^T - \boldsymbol{1}), \label{eq:beta_v_cons} \\
    &\boldsymbol{\beta_H}, \boldsymbol{\beta_V} \in \{0,1\}^{2n}. \label{eq:beta-bounds}
\end{align}

\begin{figure}
    \centering
    \includegraphics[width=0.85\linewidth]{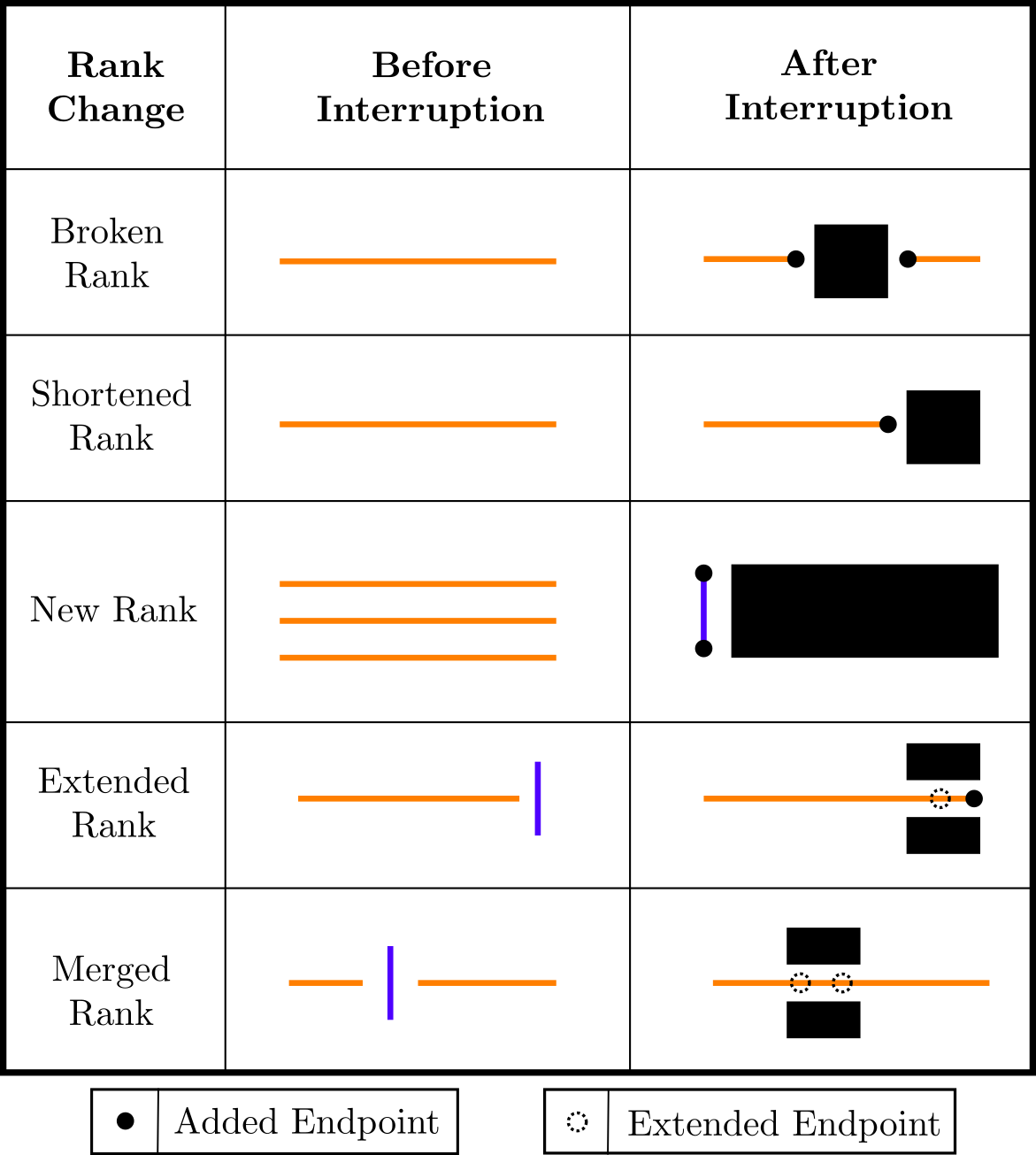}
    \caption{\docupdate{Illustration of rank endpoint changes as a result of rank replanning. Each row shows horizontal (orange lines) and vertical (purple lines) ranks blocked by obstacles (black boxes) and the corresponding replanned ranks and endpoint changes.}}
    \label{fig:all-rank-changes}
\end{figure}

We now look to bound the number of new ranks $m_{\text{new}}$ using $\alpha$ and $\beta$, for which we exploit the following property.
\begin{lemma}
Consider $\alpha$ and $\beta$ to be the number of added and extended endpoints respectively for a set of replanned ranks. Then the number of new ranks $m_{\text{new}}$ in the set is bounded as follows:
\begin{align*}
    m_{\text{new}} \leq \alpha + \beta/2.
\end{align*}
\label{lem:endpoints}
\end{lemma}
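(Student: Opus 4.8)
The plan is to partition the set of new ranks into two classes according to whether they contain an \emph{added} endpoint, to bound the size of each class separately by $\alpha$ and by $\beta/2$, and then to add the two bounds. The whole argument rests on one bookkeeping fact implicit in the endpoint encoding: every rank contributes exactly two entries to the stacked endpoint vectors (one among the left/top indicators and one among the right/bottom indicators, coinciding for a single-cell rank), so each added or extended endpoint belongs to exactly one rank of the replanned solution and can never be charged to two ranks at once. I would prove only the horizontal case and invoke symmetry for the vertical case. Call a new rank \emph{anchored} if at least one of its two endpoints is an added endpoint, and \emph{floating} otherwise; then $m_{\text{new}}$ equals the number of anchored plus the number of floating new ranks.

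First I would dispatch the anchored class. An unchanged rank keeps both of its initial endpoints, so for such a rank both endpoint entries satisfy $\bar y = 1$ and hence are not added; consequently every added endpoint belongs to a \emph{new} rank. Summing the count $\alpha = (\boldsymbol{1}-\boldsymbol{\bar y_H})^T\boldsymbol{y_H} + (\boldsymbol{1}-\boldsymbol{\bar y_V})^T\boldsymbol{y_V}$ over ranks, and using that distinct ranks have disjoint endpoint entries, gives $\alpha = \sum_{r \text{ new}}(\text{added endpoints of } r)$, where each anchored rank contributes at least one term. This immediately yields the first bound, (number of anchored ranks) $\le \alpha$.

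The heart of the argument, and the step I expect to be the main obstacle, is showing that every floating rank supplies at least two \emph{distinct} extended endpoints that are moreover distinct across different floating ranks. Consider a floating horizontal rank covering $c_a,\dots,c_d$ left to right. Its endpoints are not added, so $\bar y_L^a = 1$ and $\bar y_R^d = 1$: both cells were horizontal endpoints in the initial path. If $c_a$ and $c_d$ had been the left and right endpoints of the \emph{same} initial rank, that rank would equal $[c_a,c_d]$ and the rank would not be new, a contradiction; hence $c_a$ and $c_d$ belonged to different initial ranks. Let $c_b$ be the right endpoint of the initial rank whose left endpoint is $c_a$. Since $c_d$ was itself an initial right endpoint, the old rank $[c_a,c_b]$ cannot cover $c_d$ in its interior, forcing $c_b \le c_d$; and $c_b = c_d$ would make that old rank equal to the new rank, so $c_b < c_d$ and $c_b$ is interior to $[c_a,c_d]$. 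Being interior to a horizontal rank it is covered horizontally ($x_H^b=1$), is no longer a right endpoint ($y_R^b=0$), yet was one before ($\bar y_R^b=1$), so $c_b$ is an extended (right) endpoint; the orientation-preservation clause in the definition of an extended endpoint is exactly what holds here. Symmetrically, the initial rank ending at $c_d$ has a left endpoint $c_c$ with $a < c$, an extended left endpoint, and $c_b \neq c_c$ because a single cell cannot belong to two different initial ranks. Thus each floating rank carries two distinct extended endpoints.

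Finally I would collect the pieces. The extended endpoints exhibited for a floating rank are interior cells of that rank, and interior cells belong to a unique rank, so the $\ge 2$ extended endpoints counted for one floating rank are disjoint from those of every other floating rank. Hence $2\,(\text{number of floating ranks}) \le \beta$, i.e. (number of floating ranks) $\le \beta/2$. Adding the two class bounds gives $m_{\text{new}} \le \alpha + \beta/2$, as claimed, with equality already achieved by a clean two-rank merge. The two points that need the most care in the write-up are the orientation-preservation requirement for extended endpoints (handled above by noting interior cells of a horizontal rank are covered horizontally) and the disjointness accounting that forbids charging an endpoint to two ranks; Fig.~\ref{fig:all-rank-changes} can serve as a case-by-case sanity check that the anchored/floating dichotomy indeed exhausts all five transition types, the merge case being precisely the floating one.
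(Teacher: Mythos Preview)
Your proof is correct and follows essentially the same approach as the paper: partition the new ranks into those carrying at least one added endpoint (your ``anchored'' class, the paper's first four change types in Fig.~\ref{fig:all-rank-changes}) and those carrying none (your ``floating'' class, the paper's merged case), then bound the two classes by $\alpha$ and $\beta/2$ respectively. Your write-up is considerably more careful than the paper's informal case sketch; the only imprecision is that the extended endpoints $c_b,c_c$ you exhibit need not be strictly \emph{interior} to $[c_a,c_d]$ (e.g.\ $c_b=c_a$ when the old rank at $c_a$ was a single cell), but since they are still cells of that floating rank---and hence of no other new rank---your disjointness argument goes through unchanged.
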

\begin{proof}
    We start by considering all possible changes to the ranks as illustrated in Fig. \ref{fig:all-rank-changes}. For the first four possible changes, the minimum number of added endpoints for each new rank is 1. However, for a merged rank, the number of added endpoints for the new rank is 0, so we instead count the minimum number of extended endpoints. Merging two ranks can be thought of as extending two endpoints so that the ranks join to form a new rank. Therefore, a merged rank results in at least two extended endpoints. Note that merging more than 2 ranks to form one new rank results in more than 2 extended endpoints. As a result, the maximum number of merged ranks is $\beta/2$. With this, we have the following bound on $m_{\text{new}}$:
     \[
     m_{\text{new}} \leq \alpha + \beta / 2,
     \]
     which proves the lemma.
\end{proof}

We now propose our second \docupdate{ILP} extension. We use the bound we derived in Lemma \ref{lem:endpoints} to constrain $m_{\text{new}}$ in the \docupdate{ILP}. Following Proposition \ref{prop:optimality}, we add an additional term $\epsilon \text{ } (\alpha + \beta/2)$ to the objective function, where $\epsilon = 1/(\Bar{m} + 1) $, to compute a solution with the smallest $m_\text{new}$ without dominating the minimization of $m$. The resulting \docupdate{ILP} is as follows:
\begin{align}
\nonumber \text{\textbf{\docupdate{ILP}-2:}} \hspace{5mm} & \\
\min_{\boldsymbol{x_H}, \boldsymbol{x_V}, \boldsymbol{y_H}, \boldsymbol{y_V}} \quad & m  + \epsilon \text{ } (\alpha + \beta/2) \label{eq:new-obj-milp} \\
\textrm{s.t.} \quad 
& \nonumber \text{Eqs. (\ref{eq:c1-new}), (\ref{eq:c2-new}), (\ref{eq:c3}), (\ref{eq:alpha_final}) - (\ref{eq:beta-bounds})} \\
& \alpha + \beta/2 \leq \Bar{m} \label{eq:endpoint-constraint} \\
& \boldsymbol{x_H}, \boldsymbol{x_V} \in \{0, 1\}^{n} \\
& \boldsymbol{y_H}, \boldsymbol{y_V} \in \{0, 1\}^{2n}.
\end{align}

Solving \docupdate{ILP}-2 guarantees a solution where $m_\text{new} \leq \Bar{m}$. Moreover, \docupdate{ILP}-2 only adds $\mathcal{O}(n)$ decision variables to the base \docupdate{ILP} ($\boldsymbol{\beta_H}$ and $\boldsymbol{\beta_V}$). So, we expect \docupdate{ILP}-2 to run faster than \docupdate{ILP}-1. We observed that Remark \ref{remark:linear-relaxation} also applies for \docupdate{ILP}-2, so we simply solve the linear relaxation to obtain integral solutions. Since there are negative terms involving $\boldsymbol{y}_H$ and $\boldsymbol{y}_V$ (Eqs. (\ref{eq:beta_h_cons}), (\ref{eq:beta_v_cons}), \docupdate{the following constraint in the linear relaxation ensures that the optimization problem is bounded}: 
\[\boldsymbol{y}_H, \boldsymbol{y}_V \leq \boldsymbol{1}.\]

Unlike \docupdate{ILP}-1, \docupdate{ILP}-2 does not guarantee an optimal solution to the rank replanning problem, as we use a pessimistic bound on $m_\text{new}$ in our \docupdate{ILP} constraint. However, we observe in our simulation results that the paths replanned with ranks from \docupdate{ILP}-2 are close in cost to those from \docupdate{ILP}-1.
%% Rank Touring
\section{The TOUR-REPLAN Subroutine}
\label{sec:rank-touring}

\begin{figure}
    \centering
    \includegraphics[width=0.9\linewidth]{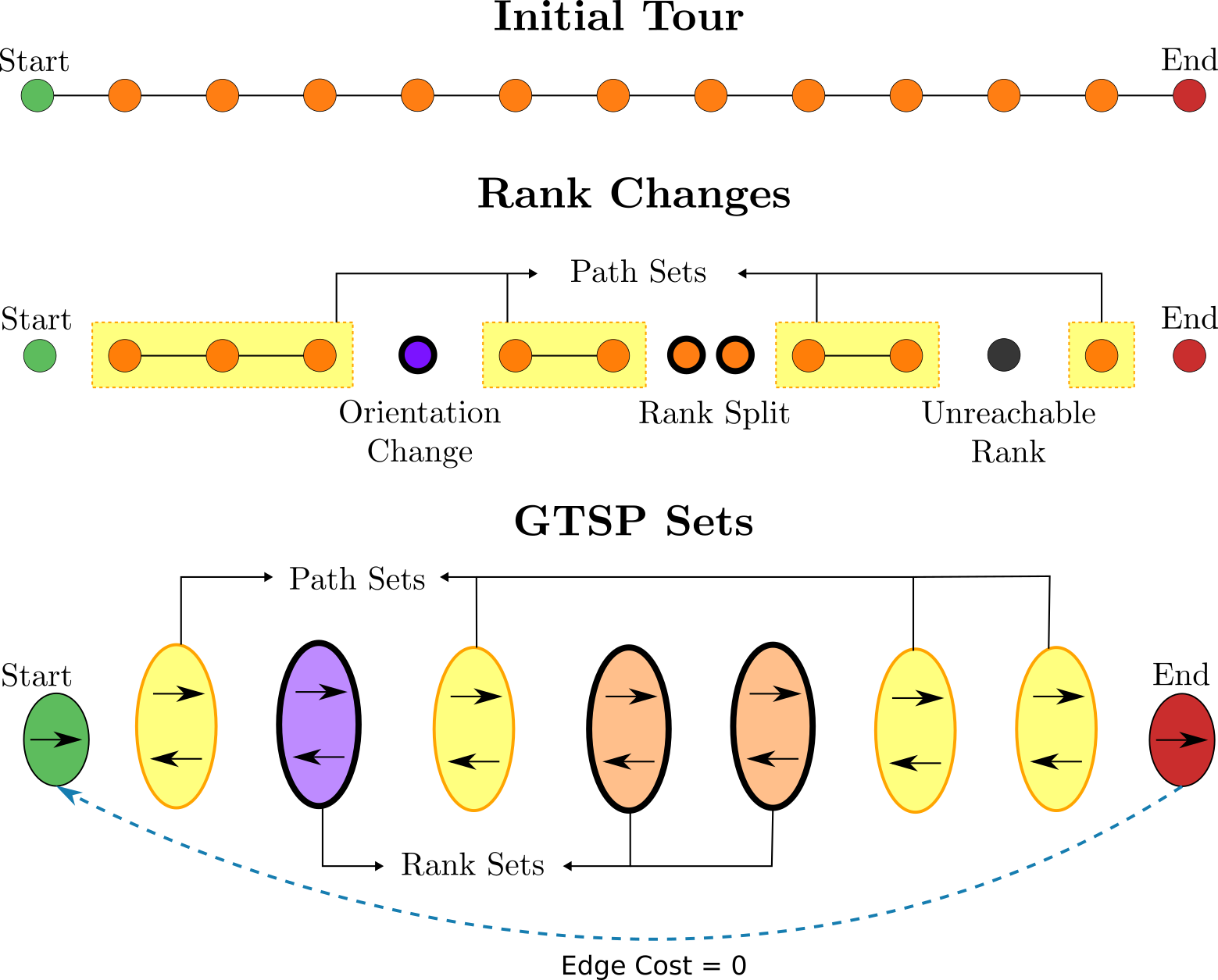}
    \caption{All possible changes to an example initial tour of only horizontal ranks (orange circles). The initial tour requires replanning if the ranks (i) switch orientations to form vertical ranks (purple circles), (ii) are split by obstacles, and (iii) become unreachable due to obstacles (black circle). The resulting GTSP sets are shown consisting of (i) path sets (yellow), (ii) rank sets for each new replanned rank, and (iii) artificial vertices representing the start and end of the initial path, with a directed zero-cost edge connecting them.}
    \label{fig:tour-changes}
\end{figure}

In this section, we describe our approach to compute a tour of the replanned ranks to obtain our new coverage path. Similar to the approaches from \cite{bochkarevMinimizingTurnsRobot2016} and \cite{rameshOptimalPartitioningNonConvex2022a}, we formulate this as a Generalized Travelling Salesman Problem (GTSP) on a \docupdate{fully-connected} auxiliary graph representing all connections between ranks. Each set in the graph represents a rank and consists of two vertices to represent the directions in which the rank can be traversed. We call these sets as \textit{rank sets}. \docupdate{The edge costs are the \textit{transition times} between two ranks, i.e. the time for the robot to travel from one rank endpoint to another. Since the transition times depend on the robot's design and dynamics, we compute the edge costs by assuming dynamics similar to \cite{rameshOptimalPartitioningNonConvex2022a} and \cite{vandermeulenTurnminimizingMultirobotCoverage2019}, where (i) the robot traverses a straight line with piecewise constant acceleration with a maximum velocity and (ii) the robot stops and turns in place with a constant angular velocity. Solving GTSP on this graph obtains a tour that covers each rank while minimizing the transition times between the ranks. In Section \ref{sec:results-setup}, we provide more details on the robot dynamics parameters used to compute transition times in our simulations.}

However, we look to address two additional challenges in the tour replan. Firstly, we look to limit the size of the GTSP instance by considering only the interrupted sections of the initial path. Fig \ref{fig:tour-changes} gives an illustration of possible path interruptions caused by (i) ranks changing orientations (ii) ranks split by an obstacle, and (iii) unreachable ranks. The ranks that are unchanged correspond with sub-sections of the initial path that are unchanged. We retain each such section by representing it as its own GTSP set, consisting of two vertices representing possible traversal directions. We refer to this set as a \textit{path set}. The GTSP is computed on a graph containing these path sets and rank sets that each represent a new rank obtained from RANK-REPLAN to be added to the tour.

The second challenge is that we must also take into account the position of the robot and the desired end of the tour. To address this, we add artificial start and end vertices to the GTSP graph, as in Fig. \ref{fig:tour-changes}. To ensure that the solution is a path from the start vertex to the end vertex, we add a directed zero-cost edge from the end vertex to the start vertex. Solving the GTSP on this graph gives us the replanned coverage path.

\docupdate{In this paper, we created a fully-connected auxiliary graph. There are alternative approaches where the auxiliary graph need not be fully-connected, such as those using lazy edge evaluations to compute a tour \cite{sahaPlanningToursRobotic2006}. However, the above challenges must still be addressed for such a formulation using the approach described for the fully-connected case.}
\section{Simulation Results}
\label{sec:results}

In this section, we evaluate the OARP-Replan approach in simulation using real-world maps. \docupdate{We compare the performance of OARP-Replan in online coverage replanning scenarios against greedy detour (GD) replan and other planning approaches \cite{rameshOptimalPartitioningNonConvex2022a, vandermeulenTurnminimizingMultirobotCoverage2019, kusnurCompleteDecompositionFreeCoverage2022}. We also compare OARP-Replan with offline coverage planning approaches provided with complete obstacle information \cite{bahnemannRevisitingBoustrophedonCoverage2021, agarwalAreaCoverageMultiple2022a}, where OARP-Replan is given no prior obstacle information.} Finally, we demonstrate using OARP-Replan to replan coverage paths in a high-fidelity ROS simulation.

\subsection{Simulation setup}
\label{sec:results-setup}

We simulate the robot covering partially known environments: the base environment is known but there are unknown obstacles that the robot discovers during coverage using its sensor. We plan initial coverage paths for the base environments using the OARP method \cite{rameshOptimalPartitioningNonConvex2022a} and replan the path as new obstacles are detected.

\begin{table}%[t!]
\vspace{6pt}
\centering
\caption{Robot parameters used for experiments}
\label{table:parameters}
\begin{tabular}{@{} l l@{}}
\toprule
Parameter & Value \\
\midrule
Coverage tool width & 0.8 $m$\\
% \hline
Maximum linear velocity & 1 $m/s$\\
% \hline
Linear acceleration & $\pm 0.5$ $m/s^2$ \\
% \hline
Angular velocity &$30^{\circ}/s$ \\
% \hline
% Maximum transition length $l_t$ & 1.6 $m$\\
% \hline
Radial sensor range & 5.6 $m$\\
\bottomrule
\end{tabular}
% \vspace{-2mm}
\end{table}

\emph{\textbf{Dataset:}} We use a dataset of \docupdate{8} real-world 2D maps obtained from Avidbots as our base environments, with their minimum coverage path lengths ranging from 145 $m$ to 1900 $m$. We then add randomly generated obstacles with varying sizes and orientations to create a set of unknown environments. Specifically, we create two datasets of unknown environments: (i) \textit{fixed clutter} dataset where the new obstacles occupy 10\% of the base map's free area for all maps, and (ii) \textit{varying clutter} dataset where starting with a candidate base environment, we generate unknown environments by varying the \% area occupied by new obstacles. For the varying clutter dataset, we chose the base environment to be the one used in Fig. \ref{fig:replanning-example}(a) and we test with 2 to 20 \% area occupied. Due to the confidentiality of the environments, we are unable to share the complete dataset, but we have created anonymized environments for visualization, as seen in Figs. \ref{fig:replanning-example}, \ref{fig:replan-results}, and \ref{fig:ros-simulation}. Fig. \ref{fig:replan-results} shows an example base environment containing three unknown obstacles (red boxes) for which the initial coverage path (dark path) needs replanning.

\emph{\textbf{Implementation:}} We test the two variants of OARP-Replan (Algorithm \ref{alg:oarp-replan}) based on which RANK-REPLAN method is used: (i) \textit{OARP-Replan-1} which uses \docupdate{ILP}-1, and (ii) \textit{OARP-Replan-2} which uses \docupdate{ILP}-2. Both approaches were implemented in Python. For each \docupdate{ILP}, we solve the corresponding linear relaxation using the free SCIP solver \cite{achterbergSCIPSolvingConstraint2009}. Then, we compute a tour of the replanned ranks by solving the GTSP formulation from Section \ref{sec:rank-touring}, for which we use the open-source GLNS solver \cite{smithGLNSEffectiveLarge2017}. Table \ref{table:parameters} shows the robot parameters used to compute the transition time between ranks. The robot is also equipped with a radial sensor in simulation (radius in Table \ref{table:parameters}) that detects new obstacles and runs OARP-Replan.

\emph{\textbf{Metrics:}} \docupdate{Our main metric of interest is the time taken by the robot to complete coverage, i.e. \textit{total coverage time}. This is computed as a sum of drive time and \textit{stoppage time}, where stoppages occur when the robot is waiting for a new path to be computed. Table \ref{table:parameters} shows the robot parameters used to compute the path drive time using the dynamics model from Section \ref{sec:rank-touring}. To measure stoppage, we compute the minimum excess time taken by the algorithm to compute the replanned path as follows:
\begin{align*}
    \nonumber\text{Minimum stoppage time} &= \max(\{0,  
    [\text{Time to plan transitions}] \\ 
    & \quad + [\text{Time to find GTSP tour}]\\ 
    & \quad - [\text{Time to obstacle encounter}]\}. \label{eq:stoppage-time-comp}
\end{align*}}

\begin{figure}
    \centering
    \includegraphics[width=0.75\linewidth]{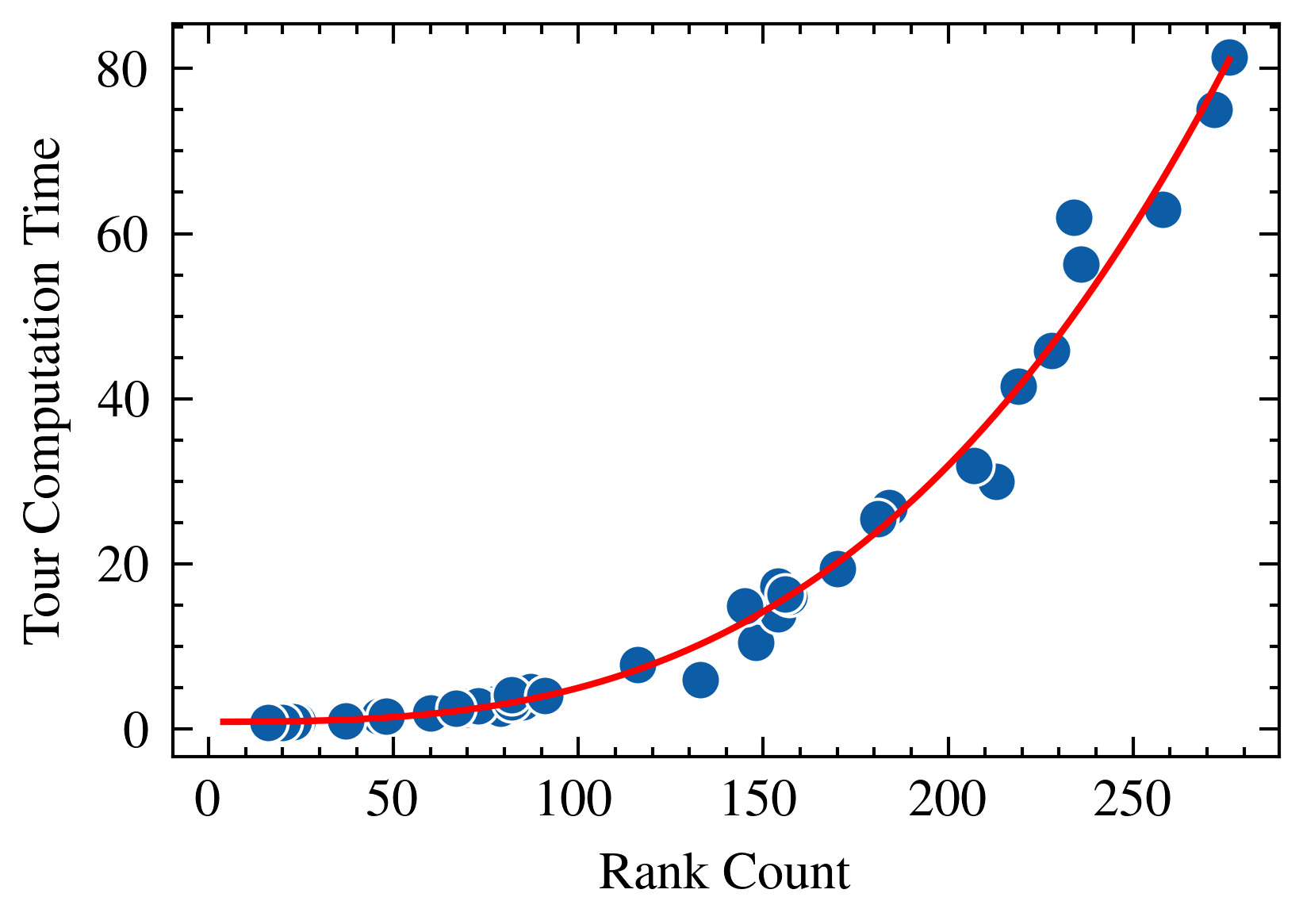}
    \caption{Data used to estimate TOUR-REPLAN runtime, where we plot the number of ranks (x-axis) and the time taken to compute all transitions and replan GTSP (y-axis). \docupdate{The red line shows the cubic polynomial fit to the data.}}
    \label{fig:GTSP-estimator plot}
\end{figure}

\emph{\textbf{Replan runtime estimator:}} \docupdate{In each simulation run, the robot conducts many replans of the coverage path depending on the obstacles it observes. For each replan, we compute a time $\tau$ to reach the next obstacle encounter along the current path using the robot model from Section \ref{sec:rank-touring}. Using $\tau$, we then determine a budget $\Bar{m}$ for rank replanning using an estimator function $\Hat{T}(m)$ of the runtime needed to compute a tour for $m$ ranks. The inverse of $\Hat{T}(m)$ is then used to compute the replanning budget, i.e. $\Bar{m} = \Hat{T}^{-1}(\tau)$.} 

The runtime estimate $\Hat{T}(m)$ includes (i) the estimated time to compute transition paths $\Hat{T}_\text{trans}(m)$, and (ii) the estimated time to solve the GTSP $\Hat{T}_\text{GTSP}(m)$. Transition path computation is included as it is a time-intensive process, especially for large maps. Formally, 
\begin{align*}
    \Hat{T}(m) = \Hat{T}_\text{trans}(m) + \Hat{T}_\text{GTSP}(m).
\end{align*}

We use data collected from previous coverage planning runs to estimate each function. \docupdate{Our solver of choice, GLNS, typically computes a tour in $\mathcal{O}(m^2)$ time for $m$ ranks \cite{smithGLNSEffectiveLarge2017}.} Therefore, we estimate $\Hat{T}_\text{GTSP}$ by fitting a cubic polynomial through the data as shown in Fig. \ref{fig:GTSP-estimator plot}, \docupdate{where we plot the time to compute the tour for different rank counts ($m$ values).} For $\Hat{T}_\text{trans}(m)$, we use the following map-specific linear estimator. Let $t_\text{avg}$ be the average time to compute the transition paths for one rank in the given map. We compute $\Hat{T}_\text{trans}(m)$ as follows:
\begin{align*}
    \Hat{T}_\text{trans}(m) = t_\text{avg} m.
\end{align*}
\docupdate{Under-estimating the replan runtime will result in robot stoppage, as the robot must wait for the new path to be computed. Using the above estimator with OARP-Replan in our simulations, we observed little to no such robot stoppage. Therefore, we use this simple estimator for the rest of the paper as it works well for OARP-Replan. Regardless, this estimator may be inaccurate in cases where the time to compute transition path differs significantly from $t_\text{avg}$ (e.g. time to compute transitions increase when a large number of obstacles are added to an environment). While other sophisticated estimators can be used with OARP-Replan to handle this, we do not focus on this for this paper.}

\subsection{Comparisons with online approaches}
\label{sec:results-cost-comp}

\docupdate{
In this section, we compare the total coverage time (drive time $+$ stop time) of our two proposed OARP-Replan approaches against online coverage planning/replanning approaches. Our baseline approaches are as follows: (i) the OARP approach as proposed in \cite{rameshOptimalPartitioningNonConvex2022a}, (ii) the heuristic (HR) approach from \cite{vandermeulenTurnminimizingMultirobotCoverage2019}, (iii) the online search-based (SB) coverage planning approach from \cite{kusnurCompleteDecompositionFreeCoverage2022}, and (iv) GD replan, i.e. the practical case of following a turn-minimizing coverage path with a local planner. The first two approaches minimize turns in the path but they are offline planners. So we modify them to be usable online for replanning as follows:
\begin{enumerate}[(i)]
    \item while the robot is following the path and detects new obstacle encounters, the planning approach is used to replan the ranks for the remaining coverage path, and
    \item the tour of the resulting ranks is computed by solving the GTSP as formulated in Section \ref{sec:rank-touring}, where unchanged parts of the coverage path are preserved.
\end{enumerate}
For OARP, we replan coverage ranks by solving the linear relaxation of \docupdate{ILP}-0 from Section \ref{sec:rank-replan} (identical to the LP in \cite{rameshOptimalPartitioningNonConvex2022a}). We refer to this online variant of OARP as \textit{OARP-Original}. For the HR approach, we solve Algorithm 1 from \cite{vandermeulenTurnminimizingMultirobotCoverage2019} to replan ranks. The latter two approaches (SB and GD) are designed for online coverage planning with fast computation of coverage paths. As a result, these approaches have negligible stoppage time. The online SB approach plans the entire coverage path online and does not require an initial path. However, the SB approach minimizes path length rather than total coverage time with turns.
}

\begin{figure}
    \centering
    \subcaptionbox{}{
    \centering
    \includegraphics[width=0.465\linewidth]{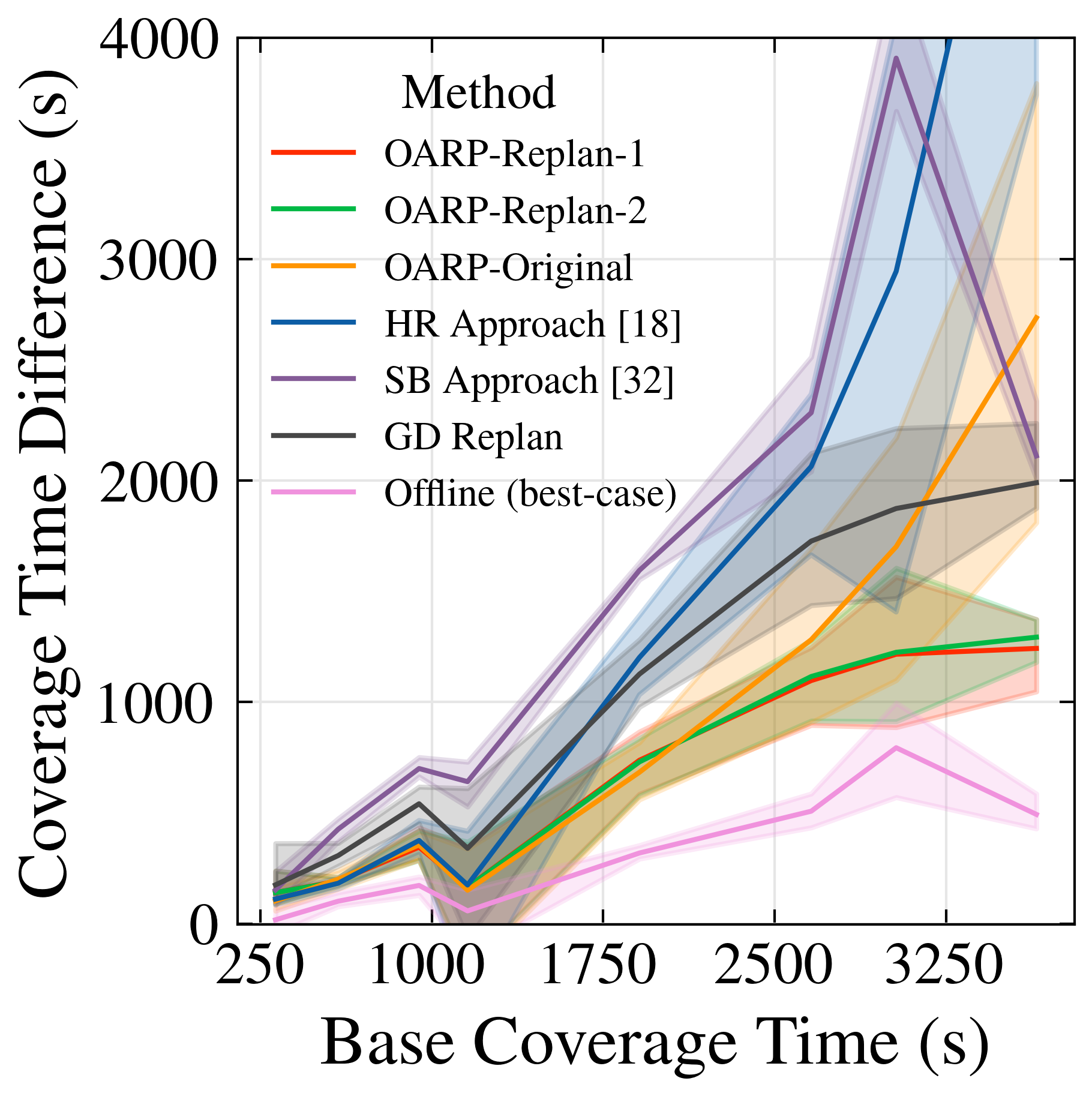}
    }
    \subcaptionbox{}{
    \centering
    \includegraphics[width=0.465\linewidth]{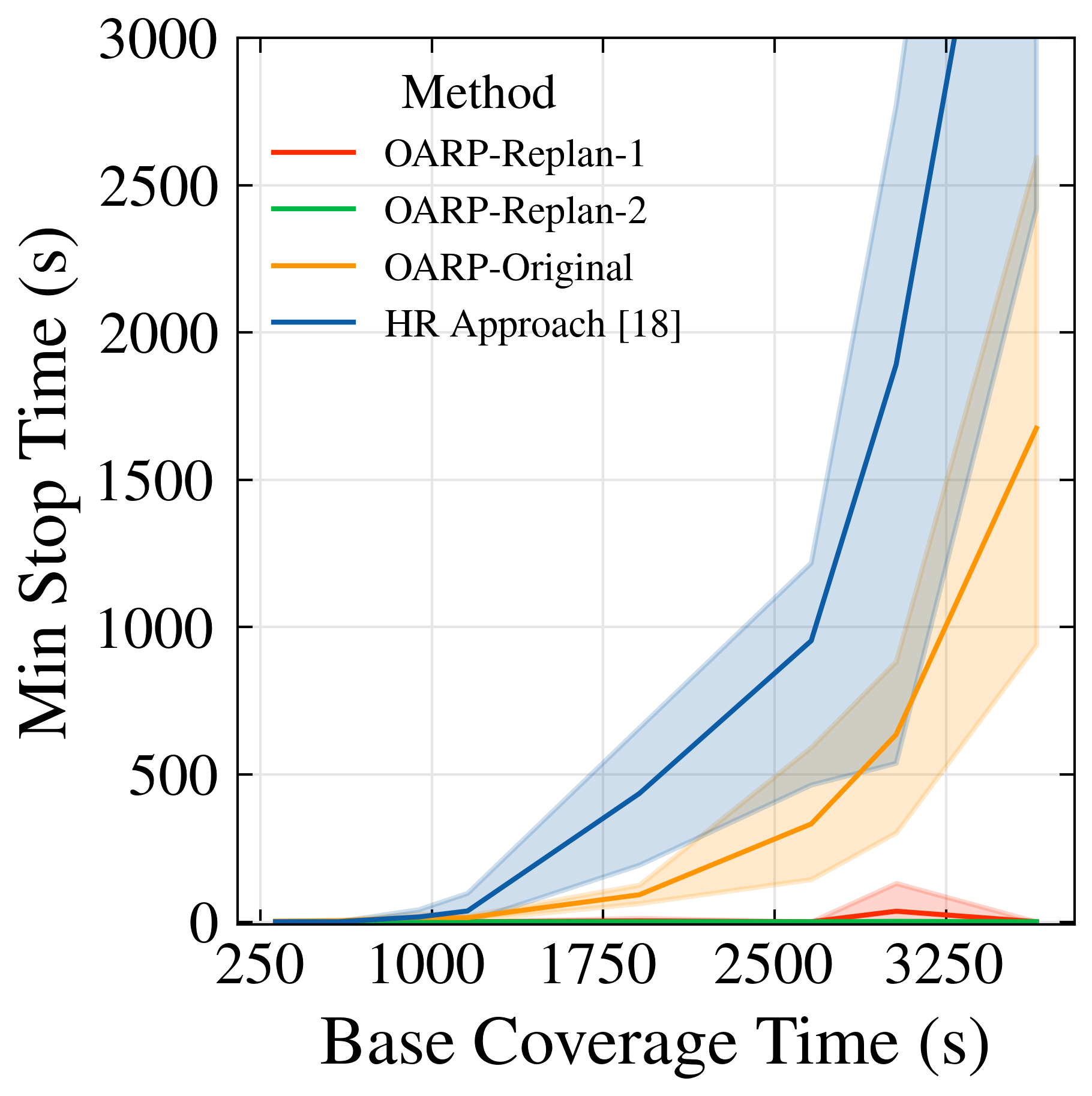}
    }
     \caption{\docupdate{Comparison of the online replanning approaches using the fixed clutter dataset: (a) Difference between total coverage time and base coverage time. (b) Minimum stop time of the robot during coverage (GD and SB omitted as replanning time is negligible). The line shows the average and the lighter region shows the data interval.}}
    \label{fig:comparison-fixed-clutter}
\end{figure}

\begin{figure*}
\centering
\subcaptionbox{\textit{GD replan} result}
{\centering
\includegraphics[width=0.24\linewidth]{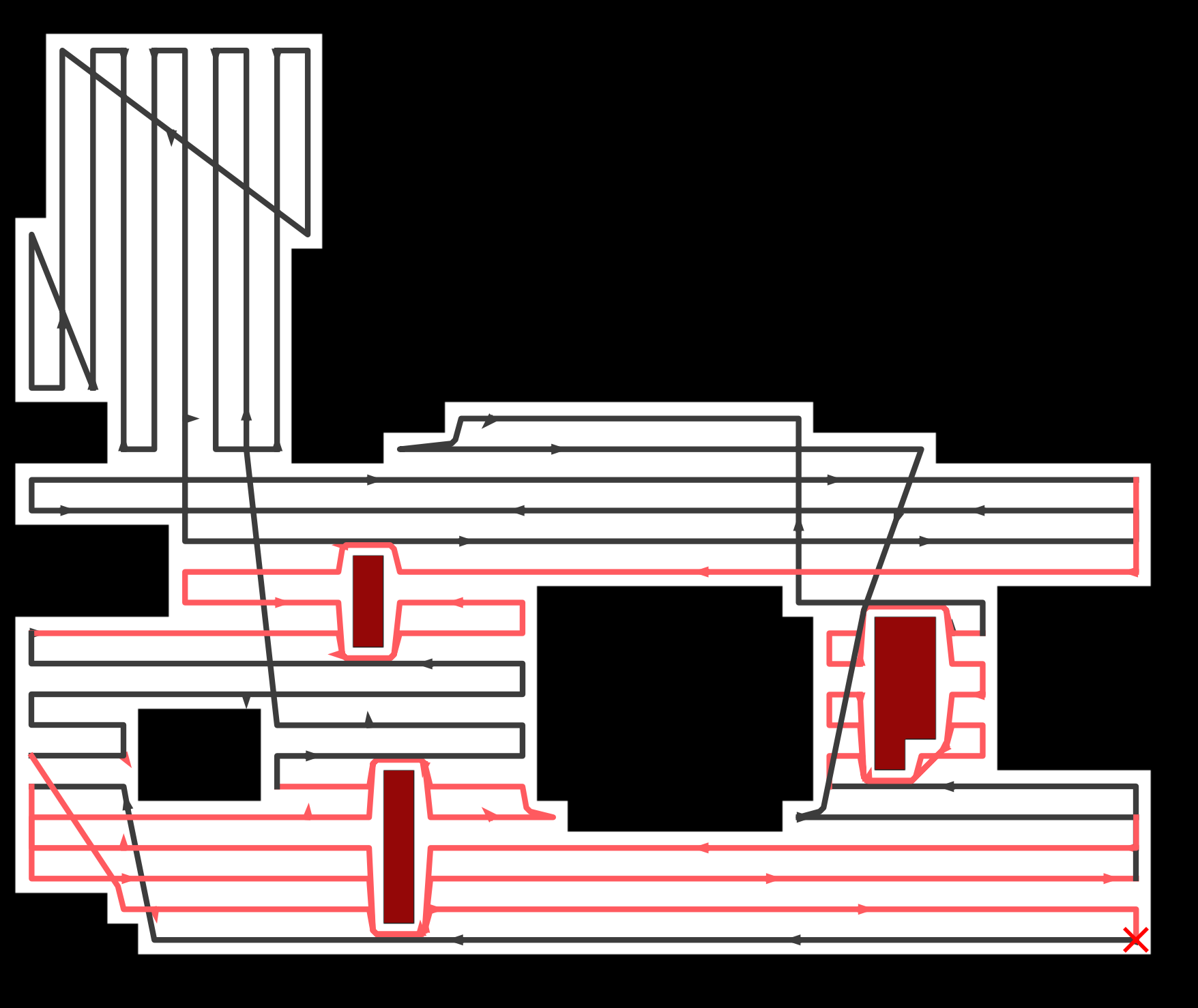}
}
\hspace{2mm}
\subcaptionbox{\textit{OARP-Replan-1} result}
{\centering
\includegraphics[width=0.24\linewidth]{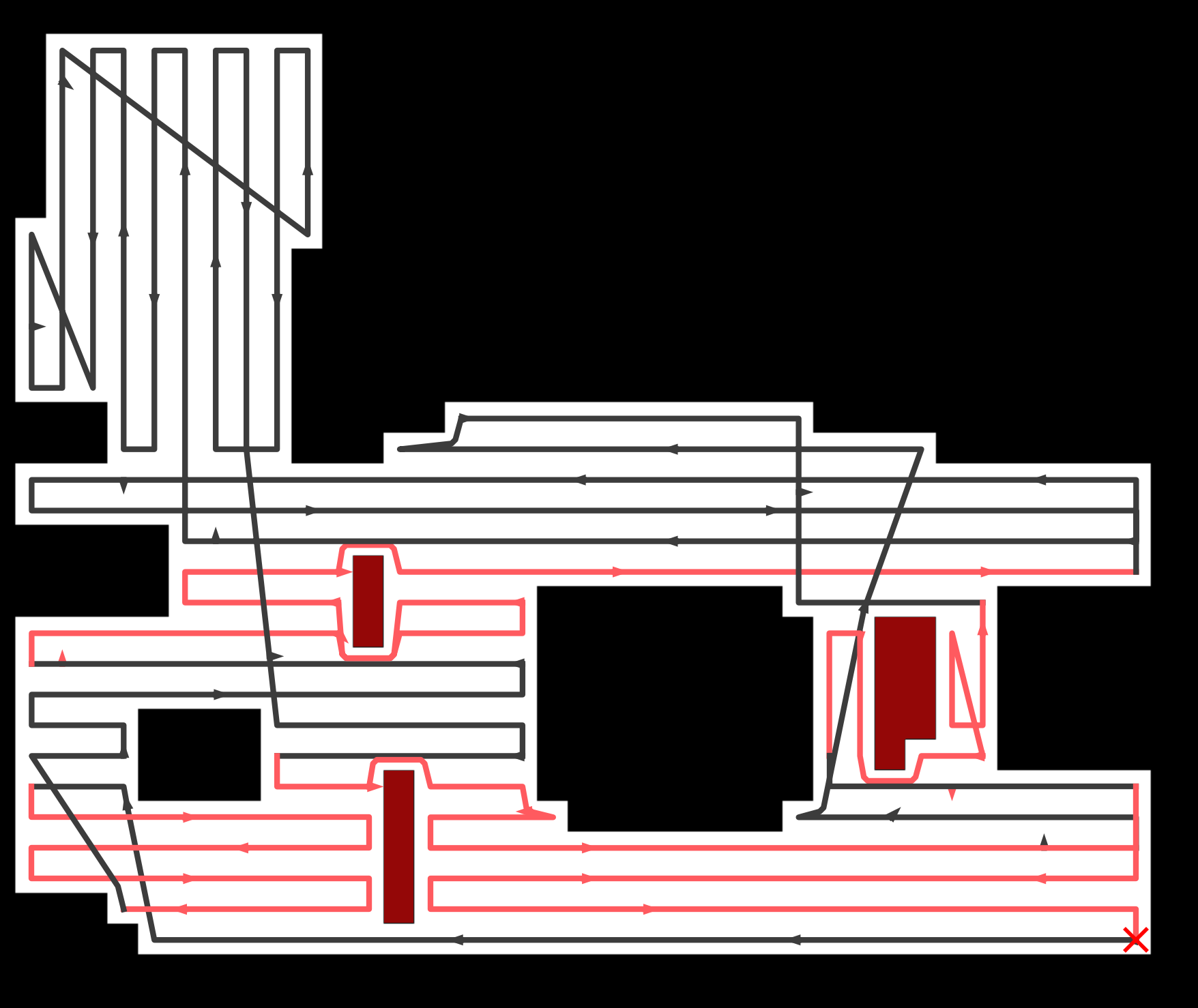}
}
\hspace{2mm}
\subcaptionbox{Path improvement examples}
{\centering
\includegraphics[width=0.3\linewidth]{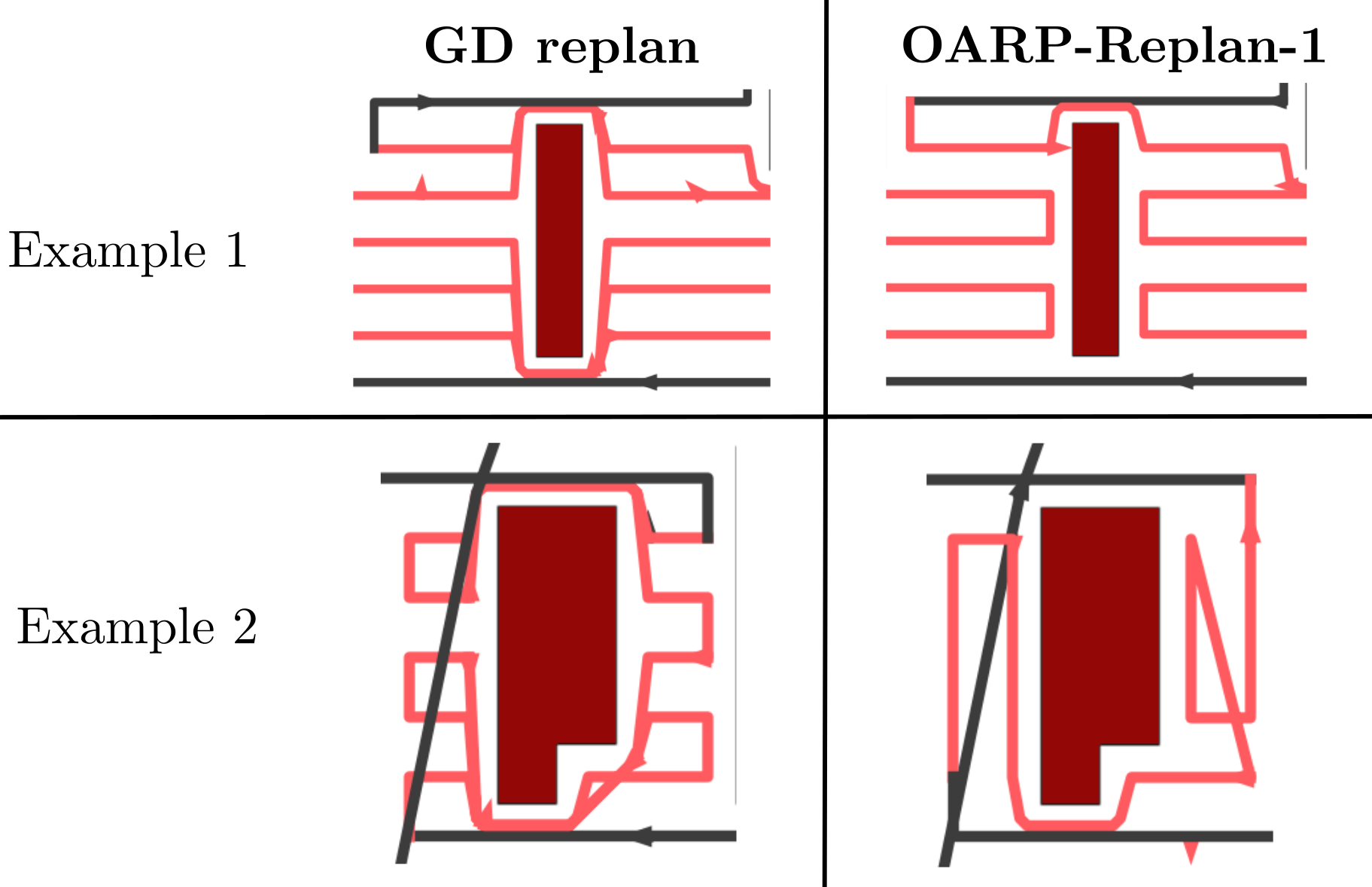}
}
\caption{Qualitative results of replanning an initial coverage path for the example environment with unknown obstacles. \vspace{-2mm}}
\label{fig:replan-results}
\end{figure*}

\emph{\textbf{Fixed Clutter Dataset:}} We first review the results for the fixed clutter dataset. For each base environment, we ran 5 trials, where each trial was generated with a new random set of obstacles occupying 10\% of the free space in the environment. \docupdate{The maps are sorted in the order of \textit{base coverage time}, which is the time to cover the base environment (without the unknown obstacles) using the initial coverage path. Fig. \ref{fig:comparison-fixed-clutter}(a) and (b) show the results of the comparison where we plot the following for each method: (a) the difference between the total coverage time (including stops) and the base coverage time (i.e. \emph{total} $-$ \emph{base}), and (b) the minimum stop time. To visualize the \textit{best-case} coverage time for each environment (a lower bound), we also include the results for an offline planner (OARP) when provided with all obstacle information a-priori. We observe that both OARP-Replan approaches result in lower total coverage times relative to all baseline approaches, and comes closest to the performance of the offline planner. In comparison with GD replan, OARP-Replan exhibits an average improvement of about 12\% across all maps.} To visualize this improvement, we show an example map with three unknown obstacles (red boxes) in Figs. \ref{fig:replan-results}(a) and (b) where the initial path was replanned using GD replan and OARP-Replan-1 respectively. Fig. \ref{fig:replan-results}(c) shows two examples where we observe qualitative differences between the approaches. In example 1, paths replanned using GD replan (red paths) have multiple long detours around the obstacle, whereas OARP-Replan-1 has no such detours. In example 2, GD replan generates messy overlapping paths when avoiding the large obstacle, while OARP-Replan-1 reorients the ranks in this region to obtain a path with fewer overlaps. Using OARP-Replan-2, we observe similar improvements in the replanned path for this example. Since the tour of the replanned ranks is computed with little to no stoppage, the replanning time does not affect the total coverage time.

\docupdate{When comparing with OARP-Original and the HR approach \cite{vandermeulenTurnminimizingMultirobotCoverage2019}, both OARP-Replan approaches achieve faster coverage times, with the gap increasing as initial paths get longer (larger maps). This is because both \docupdate{ILP}-0 and the HR approach do not limit the change to the coverage path}, leading to longer replanning times and resulting robot stoppages (Fig. \ref{fig:comparison-fixed-clutter} (b)). In contrast, OARP-Replan approaches cause little to no stoppage. We expect this difference in performance to be exaggerated in weaker computation setups, such as onboard computers on robot platforms. \docupdate{The OARP-Replan approaches also outperform the SB approach as a result of minimizing the number of ranks (turns) in the coverage path. However, we observe that the SB approach performs well on the map with the longest base coverage time when compared to the other baselines. Note that this map is not the largest in terms of area, so we reason that the performance of SB is more closely related to map area than to base coverage time (which may be influenced by complex layouts/geometry).} Between the OARP-Replan approaches, OARP-Replan-1 consistently plans paths with slightly shorter coverage times than OARP-Replan-2.

\begin{figure}
    \centering
    \subcaptionbox{}{
    \centering
    \includegraphics[width=0.465\linewidth]{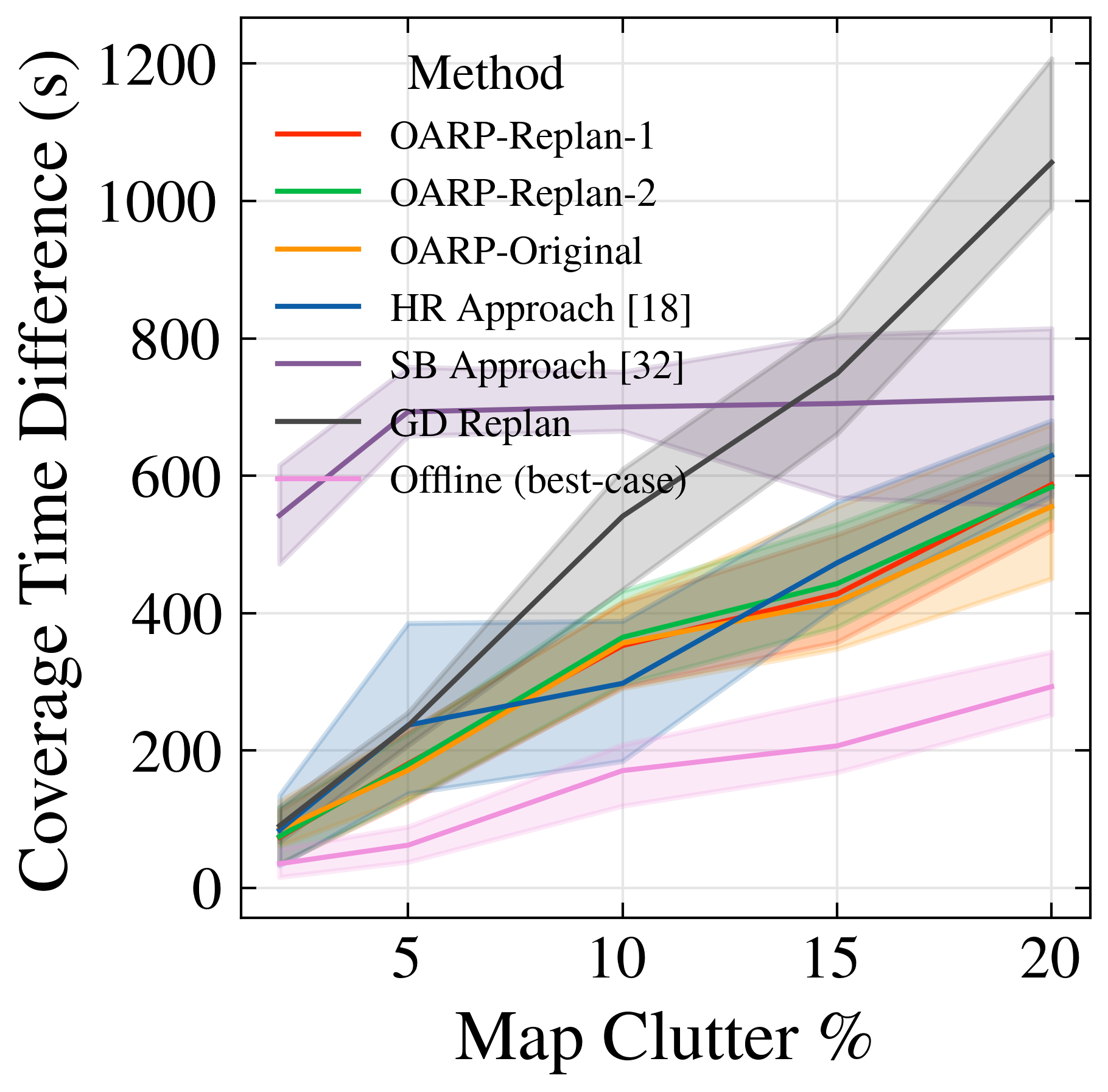}
    }
    \subcaptionbox{}{
    \centering
    \includegraphics[width=0.465\linewidth]{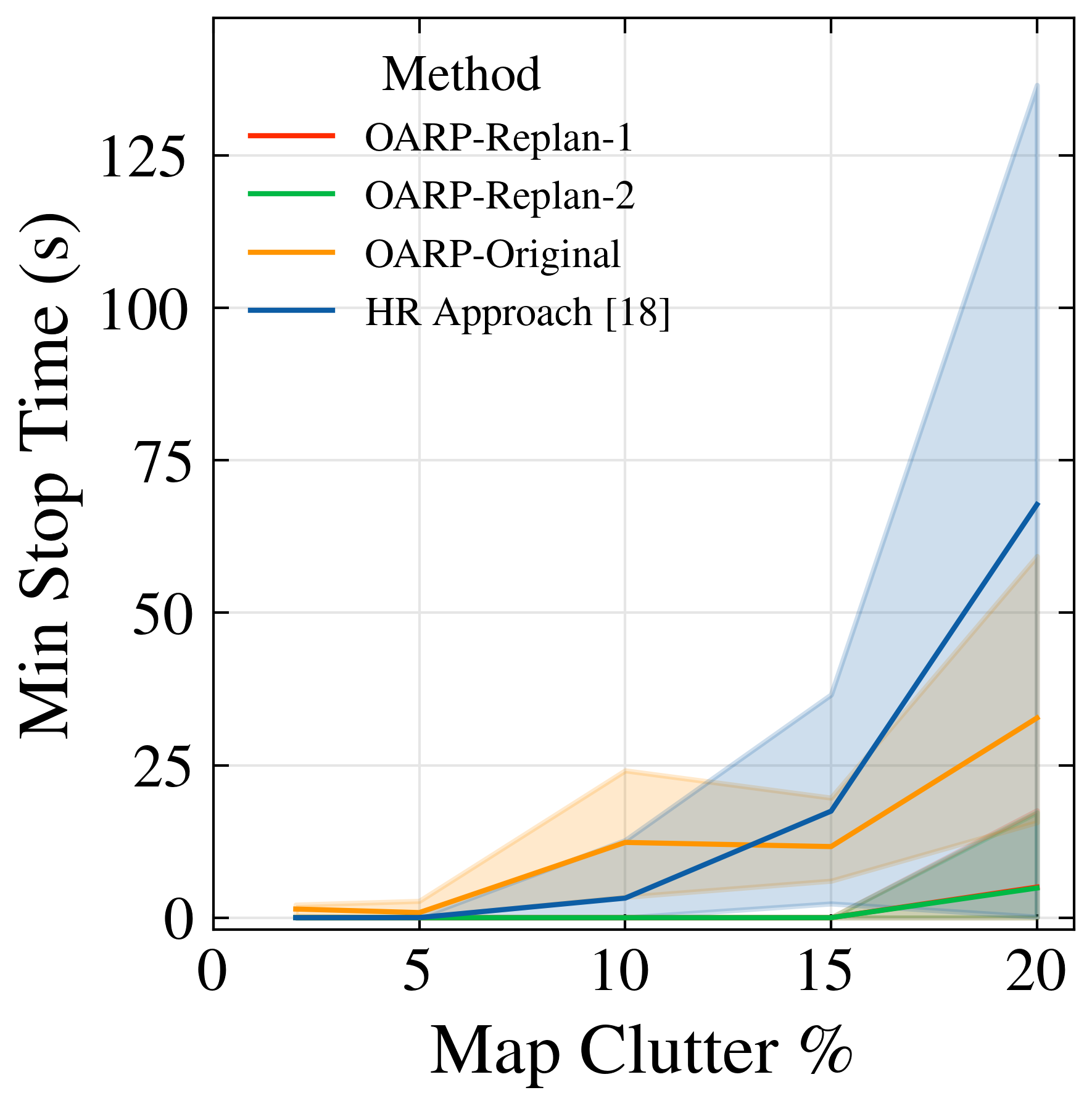}
    }
     \caption{\docupdate{Comparison of the online replanning approaches using the varying clutter dataset: (a) Difference between total coverage time and base coverage time. (b) Minimum stop time of the robot during coverage (GD and SB omitted as replanning time is negligible). The line shows the average and the lighter region shows the data interval.}}
    \label{fig:comparison-varying-clutter}
\end{figure}

\emph{\textbf{Varying Clutter Dataset:}} We also conducted similar experiments using the varying clutter dataset, where we ran 5 trials for each clutter level. \docupdate{The map chosen for this dataset has a base coverage time of 944\textit{s}. Fig. \ref{fig:comparison-varying-clutter} shows the results of the coverage time comparisons. In Fig. \ref{fig:comparison-varying-clutter}(a), we plot the difference between the total coverage time and base coverage time.} We observe that the OARP-Replan approaches outperform GD replan by 18\% on average, with the performance gap increasing with clutter level. This is because, with more obstacles, GD replan causes many detours, while the OARP-Replan approaches tend to shorten such detours.

\docupdate{In comparison with OARP-Original and the HR approach \cite{vandermeulenTurnminimizingMultirobotCoverage2019}, the OARP-Replan approaches plan coverage paths with comparable coverage times. However, both OARP-Original and HR experience longer stops with increasing clutter (Fig. \ref{fig:comparison-varying-clutter}(b)).} We also observe some stoppage with using the OARP-Replan approaches in environments of 20\% clutter. We believe this is because we do not update our runtime estimator during an experiment, making it inaccurate for highly cluttered environments (e.g. computing transitions take more time with more obstacles). A more sophisticated estimator that updates itself during coverage may eliminate this stoppage. \docupdate{We also observe that the coverage time using the SB approach plateaus with increasing clutter due to SB's rapid online computation of coverage paths. Although OARP-Replan outperforms SB at all the clutter levels tested, this performance gap may decrease with higher clutter levels ($> 20\%$).} 

\subsection{\docupdate{ILP} runtime comparisons}
\label{sec:results-runtime}

In this section, we compare the average runtimes of the rank replanning \docupdate{ILP}s for both OARP-Replan approaches. Our approach relies on the \docupdate{ILP} being solved efficiently so that the GTSP tour can be reliably computed within the time budget (preventing robot stoppages). The \docupdate{ILP} from OARP-Original is excluded from this comparison as it does not constrain the GTSP input size.

Fig. \ref{fig:comparison-results-runtime}(a) and (b) show the comparison results in the fixed clutter and varying clutter datasets respectively, where we plotted the average \docupdate{ILP} runtimes for each map. \docupdate{For the fixed clutter dataset, the maps are sorted in the order of increasing size (area) of the environment, and in-turn, the number of grid cells in the IOP.} We observe that \docupdate{ILP}-2's runtime scales better than \docupdate{ILP}-1 as the environment area increases. This is mainly due to the reduction in the \docupdate{ILP} variables that speed up runtime. Moreover, we also observe in Fig \ref{fig:comparison-fixed-clutter}(b) that using \docupdate{ILP}-2 resulted in slightly shorter stop times for larger maps (the plot for OARP-Replan-2 is close to zero while OARP-Replan-1 goes up to 100$s$ for one map). We believe that this difference will be exaggerated when computation power is constrained (e.g. low-cost robot hardware) to the point that it will affect coverage time. Therefore, for such applications, we recommend \docupdate{ILP}-2 over \docupdate{ILP}-1. A future improvement of this work could involve combining both \docupdate{ILP}s, with \docupdate{ILP}-2 as a fallback for \docupdate{ILP}-1.

For the varying clutter dataset, we observe that the average runtimes of both \docupdate{ILP}s are comparable, with \docupdate{ILP}-1 performing better overall. Increasing clutter seems to slightly change both \docupdate{ILP}s' runtimes, and as such, we observed no drastic runtime difference as with the fixed clutter dataset. \docupdate{While this comparison was performed with a fixed tool width (grid cell size), changing the tool width could either increase or decrease the number of grid cells for all maps. However, since the change in map area also has the same effect, we expect the observed trend to be maintained with a change in tool width.}

\begin{figure}
    \centering
    \subcaptionbox{}{
    \centering
    \includegraphics[width=0.465\linewidth]{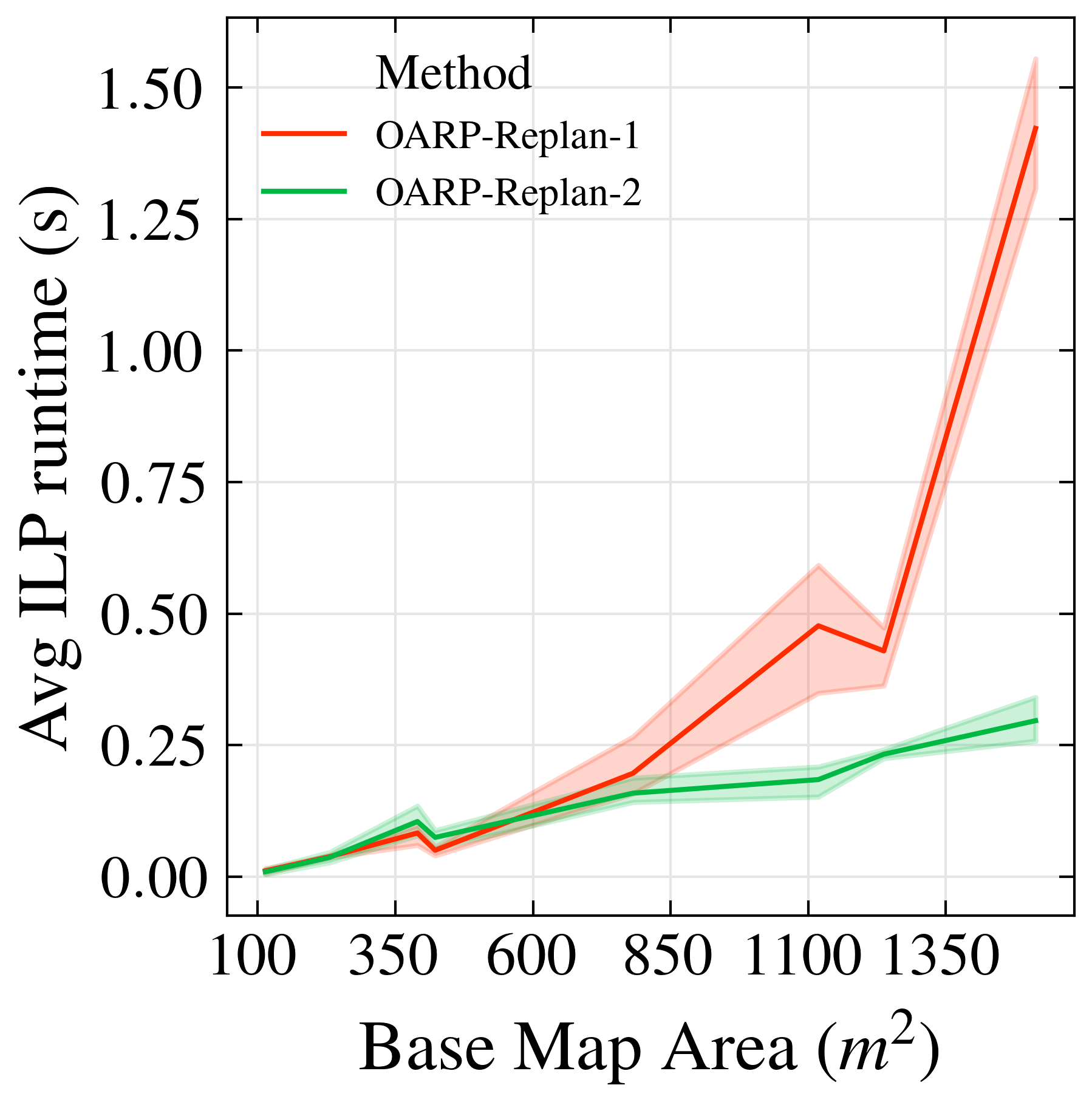}
    }
    \subcaptionbox{}{
    \centering
    \includegraphics[width=0.465\linewidth]{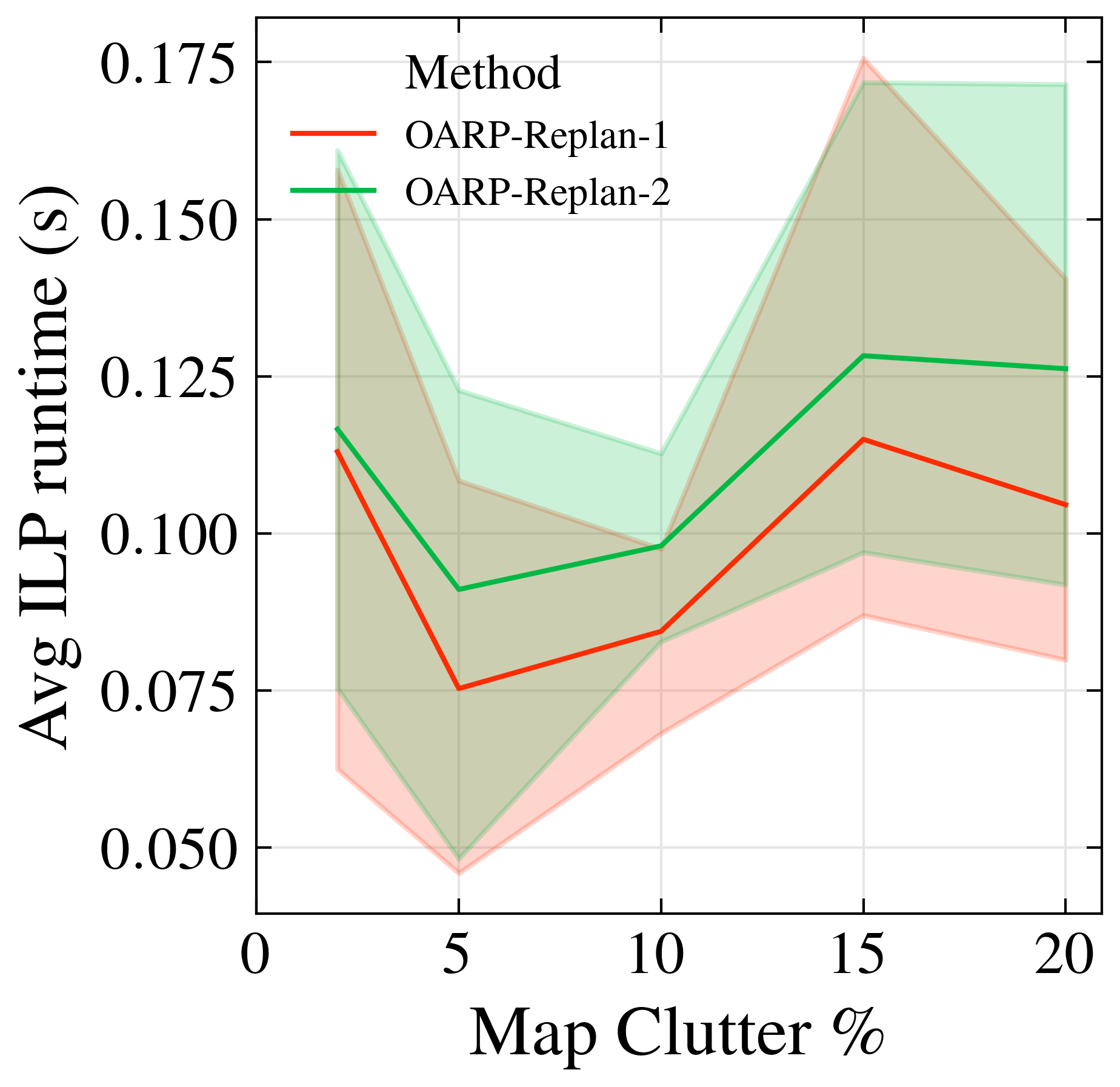}
    }
     \caption{Comparison of average \docupdate{ILP} runtimes for the OARP-Replan approaches. (a) Fixed clutter dataset comparisons, with maps sorted by \textit{base map area}. (b) Varying clutter dataset comparisons. The line shows the average and the lighter region shows the data interval.}
    \label{fig:comparison-results-runtime}
\end{figure}

\subsection{Comparisons with offline approaches}
\label{sec:results-BCD}

\docupdate{In this section, we compare the proposed OARP-Replan approach (i.e. OARP-Replan-1) against \textit{offline} coverage planning approaches, which are provided with all obstacle information prior to coverage. In contrast, we provide OARP-Replan with no prior obstacle information, so it must replan the coverage path whenever the robot detects obstacles. We compare OARP-Replan against 3 offline approaches: (i) boustrophedon cell decomposition (BCD) method \cite{bahnemannRevisitingBoustrophedonCoverage2021}, (ii) line coverage (LC) approach \cite{agarwalAreaCoverageMultiple2022a}, and (iii) the offline OARP planner \cite {rameshOptimalPartitioningNonConvex2022a}. For clarity, we will refer to the offline approaches as \textit{BCD-Offline}, \textit{LC-Offline}, and \textit{OARP-Offline} for the rest of this section. The BCD-Offline algorithm also resembles the BCD approach from the `ipa\_room\_exploration' ROS package, which is the most effective algorithm in the package according to the adjoining survey in \cite{bormannIndoorCoveragePath2018}.}

\docupdate{While OARP-Replan was developed with mobile robots as a focus, it can also be used for coverage planning with aerial robots (fixed coverage tool width). As such, we apply the OARP-Replan approach to plan coverage paths for an aerial robot where the obstacles in the environment are not previously known.} We use the dataset from the BCD paper \cite{bahnemannRevisitingBoustrophedonCoverage2021} that consists of around 300 scans of environments with varying numbers of obstacles or \textit{holes}. The outer boundary of each map is a square, which is then filled with the obstacles observed in the scans. The maps are arranged in the order of increasing complexity, which is given by the number of \textit{hole vertices} (e.g., one square hole consists of 4 vertices). We also use robot parameters similar to that of \docupdate{the UAV from} \cite{bahnemannRevisitingBoustrophedonCoverage2021}: a tool width of 3 $m$, maximum velocity of 3 $m/s$, and linear acceleration of $\pm 1$ $m/s^2$ (\docupdate{turning time omitted to match \cite{bahnemannRevisitingBoustrophedonCoverage2021}}). \docupdate{Additionally, OARP-Replan is provided with an initial path for the unobstructed base environment (a square), and a radial range sensor similar to previous sections to identify obstacles.}

\begin{figure}
    \centering
    \subcaptionbox{}
    {\centering
    \includegraphics[width=0.68\linewidth]{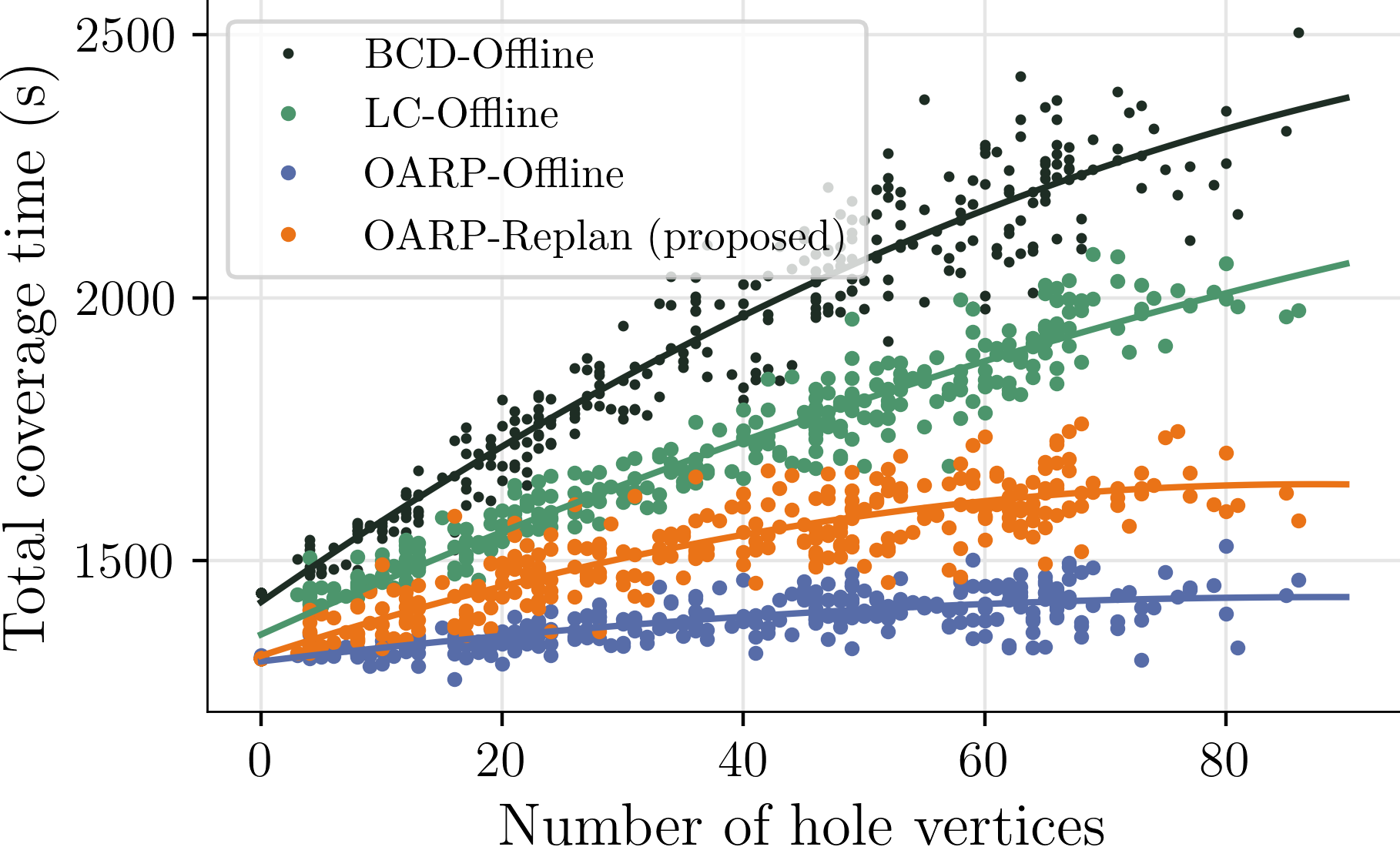}
    }
    \subcaptionbox{}
    {\centering
    \includegraphics[width=0.28\linewidth]{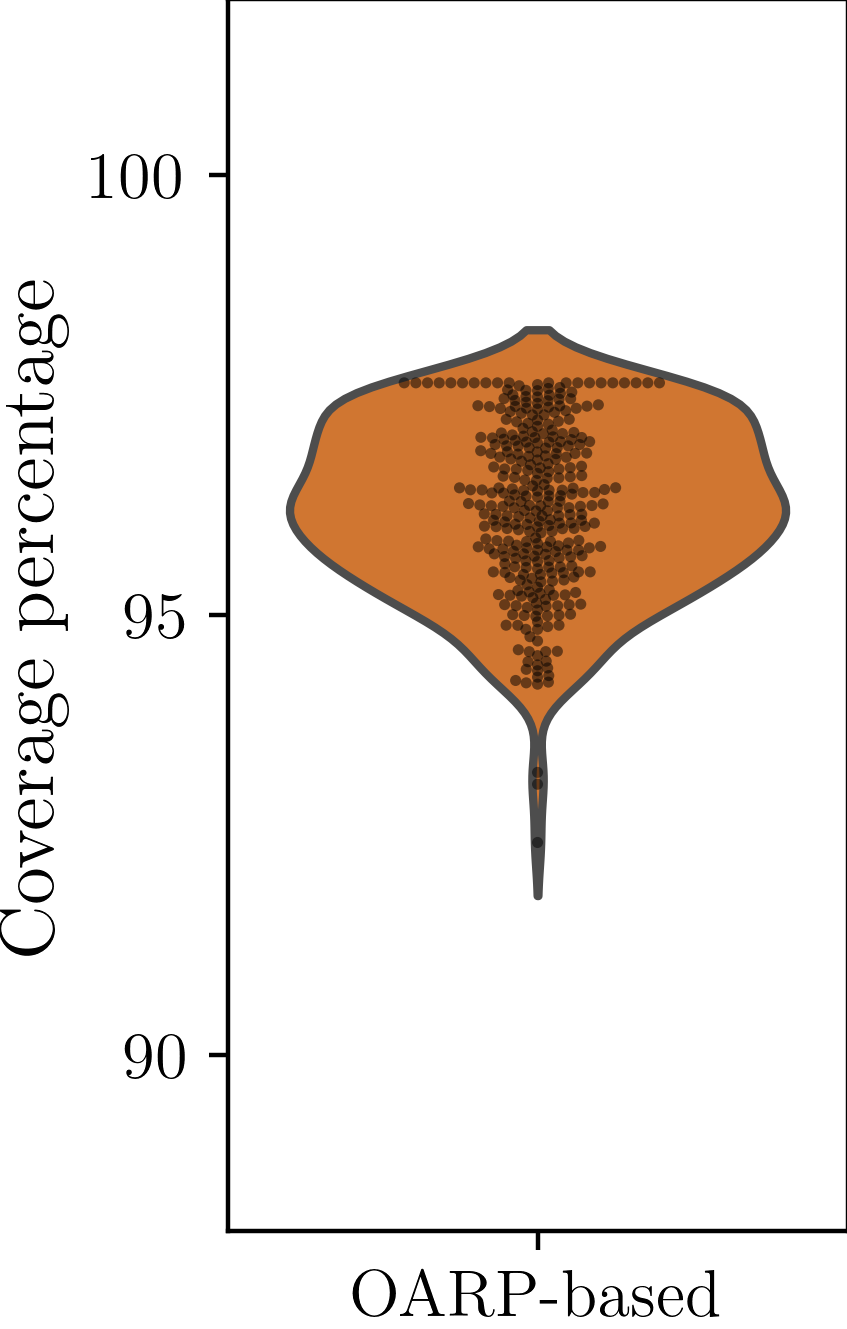}
    }
    \caption{\docupdate{Comparison results with offline planning approaches. (a) Coverage time comparisons between OARP-Replan and offline baselines. (b) Violin plot of percent coverage achieved by the OARP-based approaches (with data points plotted). BCD-Offline and LC-Offline are omitted as they are exact approaches that achieve 100\% coverage.}
    }
    \label{fig:bcd-comparison}
\end{figure}

Fig. \ref{fig:bcd-comparison}(a) shows the results of comparing the total coverage times of using all approaches. We observe that OARP-Replan outperforms BCD-Offline and LC-Offline across all maps, with an improvement of about \docupdate{25\%} on average from BCD-Offline. We believe this improvement to be due to the optimal coverage lines produced by both OARP-based methods, as BCD-Offline paths tend to overlap closer to cell boundaries and obstacles. We also observe that OARP-Offline exhibits the best performance with an average improvement of \docupdate{31\%} in coverage time from BCD-Offline. \docupdate{However, note that OARP-Offline is equivalent to OARP-Replan when provided with all obstacle information a-priori and unlimited solver time. We conclude that using OARP-Replan with a simple initial path is a competitive alternative to using BCD-Offline with complete obstacle information.} However, if obstacle information is available beforehand, OARP-Offline is a better alternative as it computes paths with the fastest coverage times. 

\docupdate{One thing to note is that both BCD-Offline and LC-Offline are exact approaches that aim to cover the entire environment (100\% coverage). In contrast, the OARP-based approaches use grid approximations that may not cover the environment fully. To address this, we plot the percent coverage achieved by both OARP-based methods (they cover the same grid) in Fig. \ref{fig:bcd-comparison}(b), where we observe that the OARP-based approaches achieve about 96\% coverage on average across all environments, with missed coverage usually occurring close to obstacles and boundaries. Therefore, we recommend OARP-Replan for covering environments with unknown obstacles where small amounts of missed coverage is acceptable.}

\subsection{\docupdate{Simulation case study}}

% \begin{figure}
% \centering
%     \subcaptionbox{}
%     {\centering
%     \includegraphics[width=0.47\linewidth]{Assets/Results/ROS Simulation/Greedy - FINAL.png}
%     }
%     \subcaptionbox{}
%     {\centering
%     \includegraphics[width=0.47\linewidth]{Assets/Results/ROS Simulation/OARP - FINAL.png}
%     }
%     \caption{\docupdate{
%     Robot operating system (ROS) simulation of covering an example environment with unknown obstacles. An initial coverage path from the environment (Fig. \ref{fig:eg-ros-simulation}(a)) was replanned as new obstacles are detected using (a) GD replan and (b) OARP-Replan.}}
%     \label{fig:ros-simulation}
% \end{figure}

This section provides a case study of a Turtlebot 3 replanning coverage paths in a ROS simulation using GD replan and OARP-Replan (i.e. OARP-Replan-1). This is to demonstrate the real-world applicability of our replanning approach \docupdate{in contrast to GD replanning}. We generated an anonymized environment as shown in Fig. \ref{fig:ros-simulation}, for which an initial coverage path was planned (Fig. \ref{fig:ros-simulation}(a)). We added new obstacles to the environment that occupied 10\% of the initial area. As the robot observes new obstacles blocking this path, the robot replans the path either using OARP-Replan or GD replan. The replanning is done while driving and the robot switches to the new path when available. The robot is also equipped with a local planner to traverse the path and make small changes if the path goes close to the walls. Fig. \ref{fig:ros-simulation}(b) shows a snapshot of the robot replanning the coverage path using OARP-Replan. We use the robot parameters from Table \ref{table:parameters} for this simulation with some changes to use the default Turtlebot controller in ROS: we set maximum velocity to 0.3 $m/s$ and linear acceleration to 1 $m/s^2$. With these parameters, we observed that OARP-Replan covered the environment 16.5\% \docupdate{faster} than GD replan along a path that was 12.2\% shorter. This difference is mainly due to the reduction in long detours where the robot travels around obstacles many times. We included recordings of this case study in our video attachment.

\begin{figure}
\centering
    \subcaptionbox{}
    {\centering
    \includegraphics[width=0.47\linewidth]{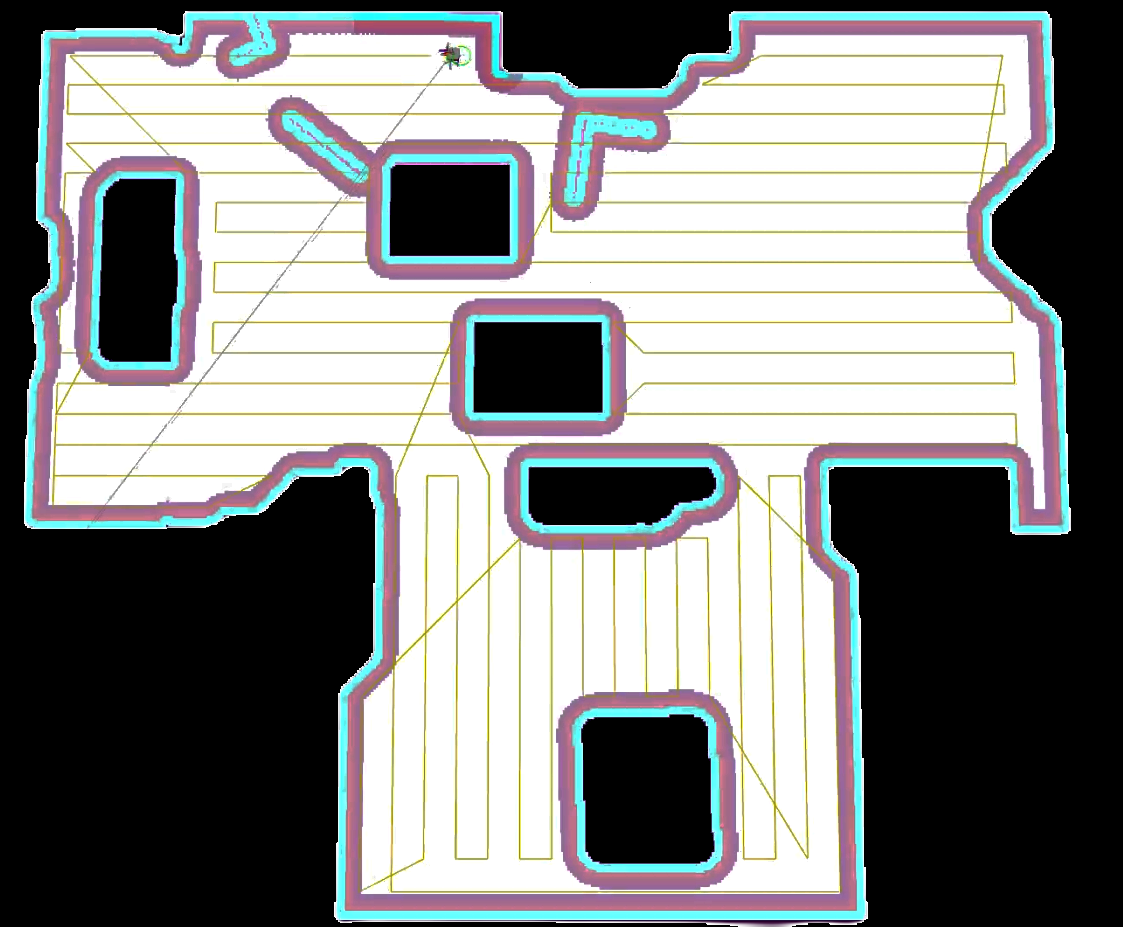}
    }
    \subcaptionbox{}
    {\centering
    \includegraphics[width=0.47\linewidth]{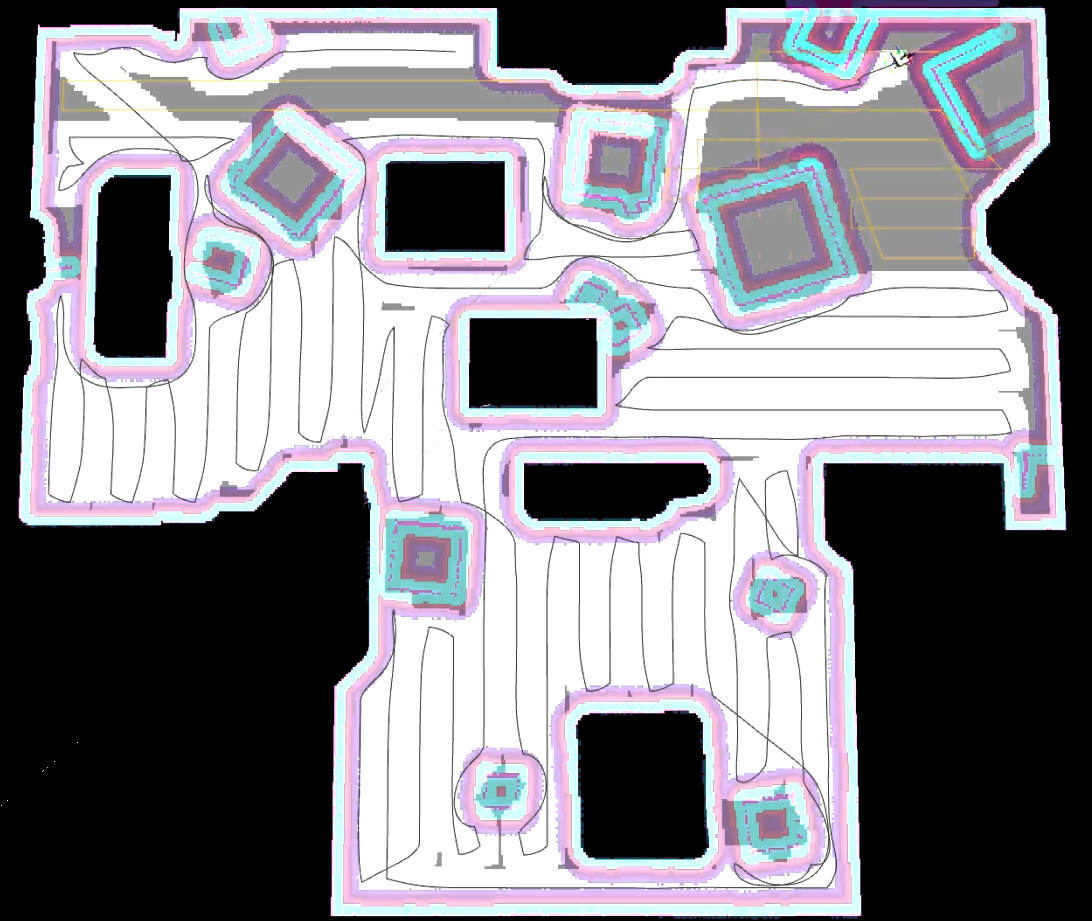}
    }
    \caption{Robot operating system (ROS) simulation of covering an example environment with unknown obstacles. (a) An initial coverage path for the base environment (b) replanning the path online using OARP-Replan.}
    % \label{fig:eg-ros-simulation}
    \label{fig:ros-simulation}
\end{figure}

\docupdate{
\section{Robot Experiments}
\label{sec:robot-experiments}

In this section, we present the results of testing OARP-Replan (i.e. OARP-Replan-1) on an Avidbots Neo robot (Fig. \ref{fig:robot-experiment-images}(a)) covering a partially unknown environment. The robot's task is to cover a test environment for which an initial path is provided (Fig. \ref{fig:robot-experiments}(a)). The robot has tricycle dynamics and is equipped with a local planner to make small changes to difficult parts of the initial path. However, the robot may encounter large obstacles during coverage (Fig. \ref{fig:robot-experiment-images}(b)) which may result in narrow regions that the robot cannot reach along its current path. Therefore, a replan of the coverage path is necessary to reach such regions and maximize the coverage of the environment.

We conducted our experiments using two test scenarios. Fig. \ref{fig:robot-experiments} shows the results of applying OARP-Replan in both scenarios. In scenario 1 (Fig. \ref{fig:robot-experiment-images}(a) / Fig. \ref{fig:robot-experiments}(b)), the robot observes an obstacle that interrupts a large number of ranks in the initial path. The path is replanned using OARP-Replan, which computes horizontal ranks to comfortably cover the narrow region (bottom right of the environment in Fig. \ref{fig:robot-experiments}(b)). In scenario 2 (Fig. \ref{fig:robot-experiment-images}(b) / Fig. \ref{fig:robot-experiments}(c)), the robot encounters a large obstacle that it must observe over time. Later in the task, the robot observes a part of the obstacle that causes a narrow coverage region (top right of the environment in Fig. \ref{fig:robot-experiments}(c)). OARP-Replan allows the robot to cover this region by re-orienting the coverage ranks. We provide recordings of these experiments in our multimedia attachment.

To implement OARP-Replan, we require an occupancy map that uses the LIDAR scan collected by the robot to identify grid cells that correspond to static obstacles. However, this occupancy map must differentiate between a static obstacle and a noisy LIDAR scan. We do this by checking consecutive LIDAR readings for obstacles that block grid cells in the occupancy grid and has not moved for some time. While this step increases the time between detecting an obstacle and replanning, it is necessary to prevent faulty replanning of the coverage path as a result of LIDAR scan noise/fluctuations. 

Note that without replanning (i.e. GD replan with the local planner), the coverage path for the narrow regions will have short ranks with turns occurring in quick succession. The Neo robot cannot drive such paths without manual assistance. The proposed replanning approach aims to reduce such instances where manual assistance is required to continue coverage.

\begin{figure}
\centering
\subcaptionbox{Neo robot in Scenario 1}
{\centering
\includegraphics[width=0.46\linewidth]{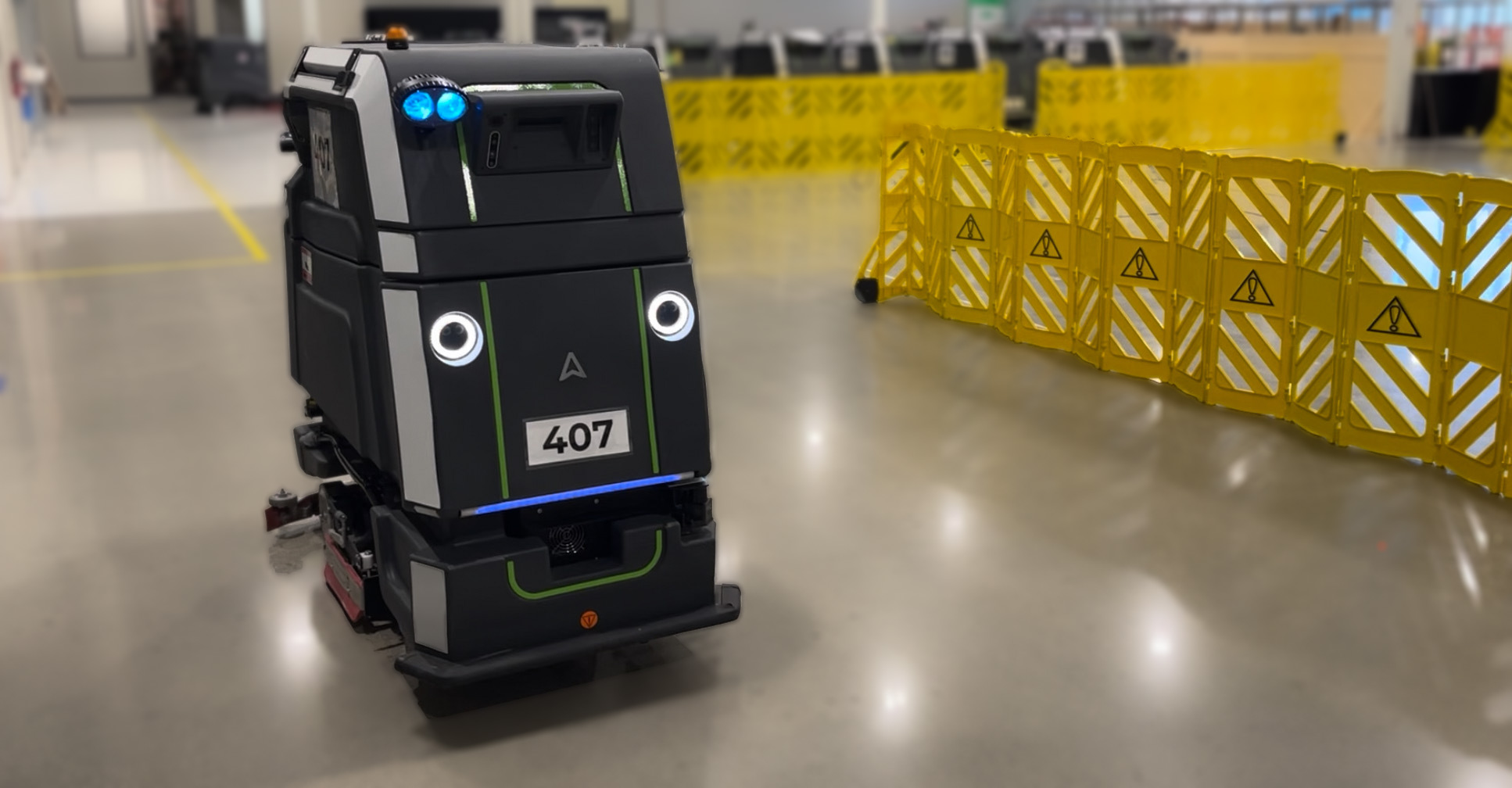}
}
\hspace{2mm}
\subcaptionbox{Scenario 2 obstacles}
{\centering
\includegraphics[width=0.46\linewidth]{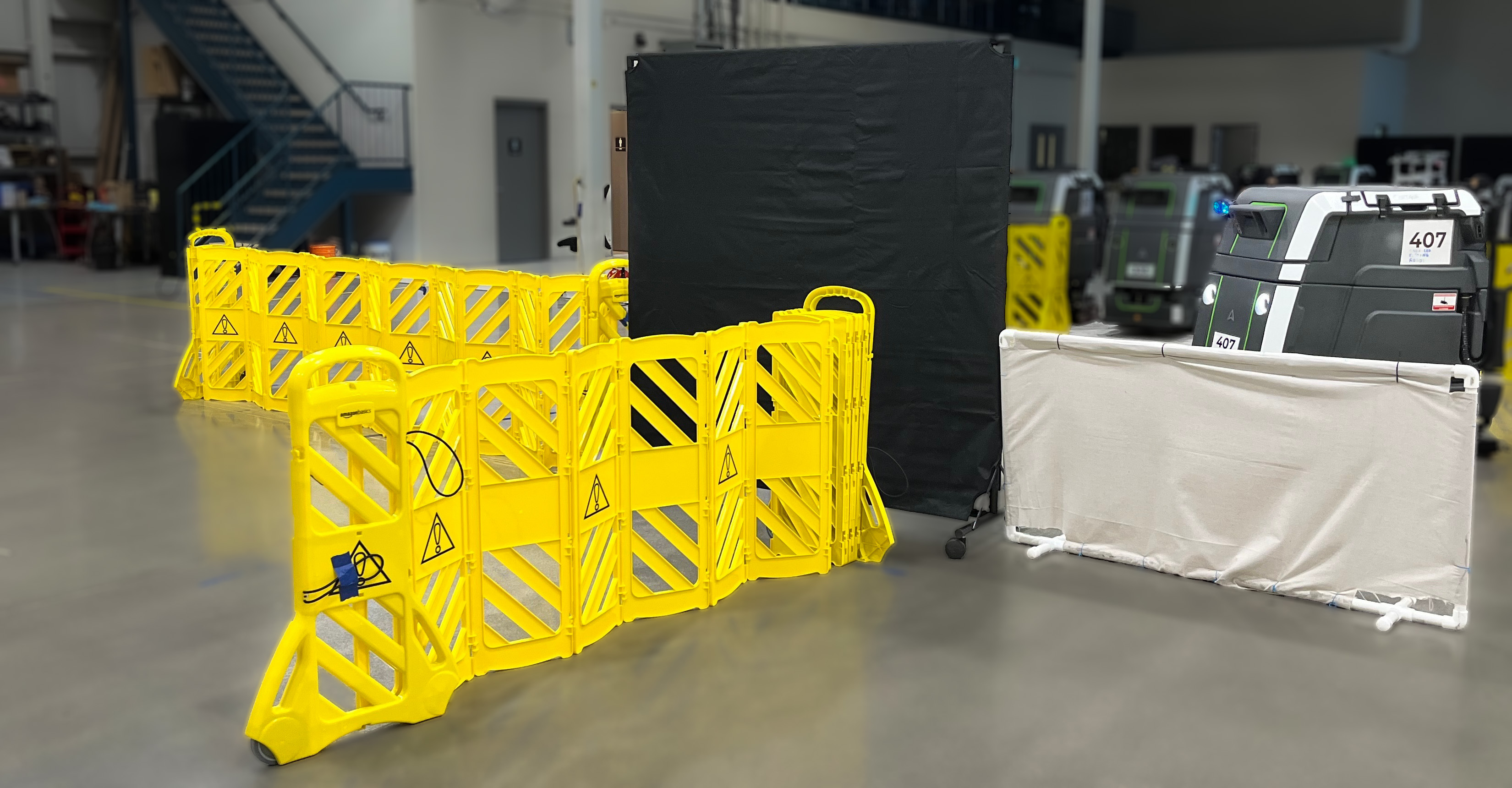}
}
\caption{\docupdate{Snapshots from experiments using the Avidbots Neo robot \vspace{-2mm}}}
\label{fig:robot-experiment-images}
\end{figure}

\begin{figure}
\centering
\subcaptionbox{Initial Path}
{\centering
\includegraphics[width=0.275\linewidth]{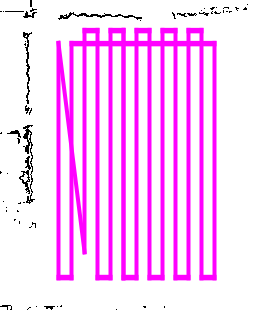}
}
\hspace{2mm}
\subcaptionbox{Scenario 1}
{\centering
\includegraphics[width=0.275\linewidth]{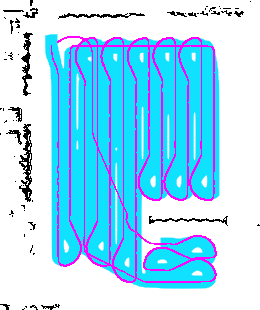}
}
\hspace{2mm}
\subcaptionbox{Scenario 2}
{\centering
\includegraphics[width=0.275\linewidth]{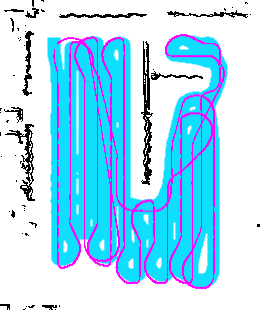}
}
\caption{\docupdate{Robot experiment results of replanning an initial coverage path using OARP-Replan. The final robot path (pink curves) and the covered area (blue) are shown for both scenarios. \vspace{-2mm}}}
\label{fig:robot-experiments}
\end{figure}

}

\section{Conclusion}

We proposed an anytime coverage replanning approach called OARP-Replan to replan coverage paths for environments with unknown static obstacles. OARP-Replan solves this problem in two stages: (i) rank replanning, and (ii) touring. The first stage replans the ranks (straight-line paths) of an obstructed coverage path, while the second stage computes a tour of the replanned ranks to obtain the new path. We proposed two \docupdate{integer linear programs (ILPs)} that each solve the rank replanning stage so that the resulting touring stage is completed within a time budget $\tau$. Simulations on maps of real-world environments showed that the OARP-Replan variants using each \docupdate{ILP} achieve better coverage performance than a greedy replanner and state-of-the-art coverage planners used for online replanning \cite{rameshOptimalPartitioningNonConvex2022a, vandermeulenTurnminimizingMultirobotCoverage2019, kusnurCompleteDecompositionFreeCoverage2022}. Moreover, we also compared OARP-Replan with offline planning approaches (treating all obstacles as known) \cite{ bahnemannRevisitingBoustrophedonCoverage2021, agarwalAreaCoverageMultiple2022a} where we observed an improvement in total coverage time. \docupdate{Finally, we demonstrated the proposed approach by replanning coverage paths for an industrial-level cleaning robot.}

\docupdate{In this work, we focus only on the case of new obstacles in the environment. The case where an obstacle is observed and later removed from the environment (including dynamic obstacles) is left for future work.} This would involve detecting areas in the environment that could be free at a later time due to an obstacle removal and returning to cover them without long detours. Another future direction involves replanning using \textit{predictions} about environment occupancy to balance coverage and exploration, which are competing objectives. \docupdate{To balance them, the coverage planner must determine \textit{when} to replan the coverage path, perhaps even delaying a replan until more information about an obstacle is collected.} Removing the axis-parallel constraint and allowing multiple directions of coverage may also improve coverage paths for non-orthogonal environments.

\section*{Acknowledgment}
This work was supported by Canadian Mitacs Accelerate Project IT16435 and Avidbots Corp, Kitchener, ON, Canada.

% Bibliography
% The following statement selects the style to use for references.  
% It controls the sort order of the entries in the bibliography and also the formatting for the in-text labels.
\bibliographystyle{IEEEtran}
\bibliography{references, online-ref-new, proposed, mine, song}

\end{document}